\newtheorem{theorem}{Theorem}
\newtheorem{lemma}{Lemma}
\newif\if@restonecol
\renewenvironment{proof}[1][\proofname]{\par
  \pushQED{\qed}%
  \normalfont \topsep6\p@\@plus6\p@\relax
  \trivlist
  \item[\hskip\labelsep
        \itshape
    #1]\ignorespaces
}{%
  \popQED\endtrivlist\@endpefalse
}
\tikzset{set/.style={draw,circle,inner sep=0pt,align=center}}
  \tikzstyle{abstractbox} = [draw=black, fill=white, rectangle,
\tikzstyle{abstracttitle} =[fill=white]
\DeclareMathOperator*{\argmax}{arg\,max}
\DeclareMathOperator*{\argmin}{arg\,min}
\tikzstyle{cblue}=[circle, draw, thin,fill=cyan!20, scale=0.8]
\tikzstyle{qgre}=[rectangle, draw, thin,fill=green!20, scale=0.8]
\tikzstyle{rpath}=[ultra thick, red, opacity=0.4]
\tikzstyle{legend_isps}=[rectangle, rounded corners, thin,
\tikzstyle{legend_overlay}=[rectangle, rounded corners, thin,
\tikzstyle{legend_phytop}=[rectangle, rounded corners, thin,
\tikzstyle{legend_general}=[rectangle, rounded corners, thin,
\colorlet{myRed}{red!20}
\tikzset{
  rows/.style 2 args={/utils/temp/.style={row ##1/.append style={nodes={#2}}},
    /utils/temp/.list={#1}},
  columns/.style 2 args={/utils/temp/.style={column ##1/.append style={nodes={#2}}},
    /utils/temp/.list={#1}}}
\definecolor{switch}{HTML}{006996}
    \pgfmathsetlength\pgfutil@tempdima{\pgfkeysvalueof{/pgf/parallelepiped
      offset x}}
    \pgfmathsetlength\pgfutil@tempdimb{\pgfkeysvalueof{/pgf/parallelepiped
      offset y}}
    \def\ppd@offset{\pgfpoint{\pgfutil@tempdima}{\pgfutil@tempdimb}}
\tikzset{anchor/.append code=\let\tikz@auto@anchor\relax,
  add font/.code=%
    \expandafter\def\expandafter\tikz@textfont\expandafter{\tikz@textfont#1},
  left delimiter/.style 2 args={append after command={\tikz@delimiter{south east}
    {south west}{every delimiter,every left delimiter,#2}{south}{north}{#1}{.}{\pgf@y}}}}
\tikzstyle{sms} = [rectangle callout, draw,very thick, rounded corners, minimum height=20pt]
\tikzset{anchor/.append code=\let\tikz@auto@anchor\relax,
  add font/.code=%
    \expandafter\def\expandafter\tikz@textfont\expandafter{\tikz@textfont#1},
  left delimiter/.style 2 args={append after command={\tikz@delimiter{south east}
    {south west}{every delimiter,every left delimiter,#2}{south}{north}{#1}{.}{\pgf@y}}}}
\tikzstyle{sms} = [rectangle callout, draw,very thick, rounded corners, minimum height=20pt]
\tikzset{l3 switch/.style={
    parallelepiped,fill=switch, draw=white,
    minimum width=0.75cm,
    minimum height=0.75cm,
    parallelepiped offset x=1.75mm,
    parallelepiped offset y=1.25mm,
    path picture={
      \node[fill=white,
        circle,
        minimum size=6pt,
        inner sep=0pt,
        append after command={
          \pgfextra{
            \foreach \angle in {0,45,...,360}
            \draw[-latex,fill=white] (\tikzlastnode.\angle)--++(\angle:2.25mm);
          }
        }
      ]
       at ([xshift=-0.75mm,yshift=-0.5mm]path picture bounding box.center){};
    }
  },
  ports/.style={
    line width=0.3pt,
    top color=gray!20,
    bottom color=gray!80
  },
  rack switch/.style={
    parallelepiped,fill=white, draw,
    minimum width=1.25cm,
    minimum height=0.25cm,
    parallelepiped offset x=2mm,
    parallelepiped offset y=1.25mm,
    xscale=-1,
    path picture={
      \draw[top color=gray!5,bottom color=gray!40]
      (path picture bounding box.south west) rectangle
      (path picture bounding box.north east);
      \coordinate (A-west) at ([xshift=-0.2cm]path picture bounding box.west);
      \coordinate (A-center) at ($(path picture bounding box.center)!0!(path
        picture bounding box.south)$);
      \foreach \x in {0.275,0.525,0.775}{
        \draw[ports]([yshift=-0.05cm]$(A-west)!\x!(A-center)$)
          rectangle +(0.1,0.05);
        \draw[ports]([yshift=-0.125cm]$(A-west)!\x!(A-center)$)
          rectangle +(0.1,0.05);
       }
      \coordinate (A-east) at (path picture bounding box.east);
      \foreach \x in {0.085,0.21,0.335,0.455,0.635,0.755,0.875,1}{
        \draw[ports]([yshift=-0.1125cm]$(A-east)!\x!(A-center)$)
          rectangle +(0.05,0.1);
      }
    }
  },
  server/.style={
    parallelepiped,
    fill=white, draw,
    minimum width=0.35cm,
    minimum height=0.75cm,
    parallelepiped offset x=3mm,
    parallelepiped offset y=2mm,
    xscale=-1,
    path picture={
      \draw[top color=gray!5,bottom color=gray!40]
      (path picture bounding box.south west) rectangle
      (path picture bounding box.north east);
      \coordinate (A-center) at ($(path picture bounding box.center)!0!(path
        picture bounding box.south)$);
      \coordinate (A-west) at ([xshift=-0.575cm]path picture bounding box.west);
      \draw[ports]([yshift=0.1cm]$(A-west)!0!(A-center)$)
        rectangle +(0.2,0.065);
      \draw[ports]([yshift=0.01cm]$(A-west)!0.085!(A-center)$)
        rectangle +(0.15,0.05);
      \fill[black]([yshift=-0.35cm]$(A-west)!-0.1!(A-center)$)
        rectangle +(0.235,0.0175);
      \fill[black]([yshift=-0.385cm]$(A-west)!-0.1!(A-center)$)
        rectangle +(0.235,0.0175);
      \fill[black]([yshift=-0.42cm]$(A-west)!-0.1!(A-center)$)
        rectangle +(0.235,0.0175);
    }
  },
}
\pgfplotsset{compat=1.16}
\tikzset{%
  interface/.style={draw, rectangle, rounded corners, font=\LARGE\sffamily},
  ethernet/.style={interface, fill=yellow!50},
  serial/.style={interface, fill=green!70},
  speed/.style={sloped, anchor=south, font=\large\sffamily},
  route/.style={draw, shape=single arrow, single arrow head extend=4mm,
    minimum height=1.7cm, minimum width=3mm, white, fill=switch!20,
    drop shadow={opacity=.8, fill=switch}, font=\tiny}
}
\newcommand*{\shift}{1.3cm}
\newcommand*{\router}[1]{
\begin{tikzpicture}
  \coordinate (ll) at (-3,0.5);
  \coordinate (lr) at (3,0.5);
  \coordinate (ul) at (-3,2);
  \coordinate (ur) at (3,2);
  \shade [shading angle=90, left color=switch, right color=white] (ll)
    arc (-180:-60:3cm and .75cm) -- +(0,1.5) arc (-60:-180:3cm and .75cm)
    -- cycle;
  \shade [shading angle=270, right color=switch, left color=white!50] (lr)
    arc (0:-60:3cm and .75cm) -- +(0,1.5) arc (-60:0:3cm and .75cm) -- cycle;
  \draw [thick] (ll) arc (-180:0:3cm and .75cm)
    -- (ur) arc (0:-180:3cm and .75cm) -- cycle;
  \draw [thick, shade, upper left=switch, lower left=switch,
    upper right=switch, lower right=white] (ul)
    arc (-180:180:3cm and .75cm);
  \node at (0,0.5){\color{blue!60!black}\Huge #1};
  \begin{scope}[yshift=2cm, yscale=0.28, transform shape]
    \node[route, rotate=45, xshift=\shift] {\strut};
    \node[route, rotate=-45, xshift=-\shift] {\strut};
    \node[route, rotate=-135, xshift=\shift] {\strut};
    \node[route, rotate=135, xshift=-\shift] {\strut};
  \end{scope}
\end{tikzpicture}}
  \def\tikz@shading{cloud}\tikz@addmode{\tikz@mode@shadetrue}}
\tikzset{my cloud/.style={
     cloud, draw, aspect=2,
     cloud color={gray!5!white}
  }
}
\begin{document}
\bstctlcite{MyBSTcontrol}
\title{Learning Security Strategies \\through Game Play and Optimal Stopping}

\author{\IEEEauthorblockN{Kim Hammar \IEEEauthorrefmark{2}\IEEEauthorrefmark{3} and Rolf Stadler\IEEEauthorrefmark{2}\IEEEauthorrefmark{3}}

 \IEEEauthorblockA{\IEEEauthorrefmark{2}
Division of Network and Systems Engineering, KTH Royal Institute of Technology, Sweden
 }\\
 \IEEEauthorblockA{\IEEEauthorrefmark{3} KTH Center for Cyber Defense and Information Security, Sweden \\
Email: \{kimham, stadler\}@kth.se%
\\
\today
}
}

\maketitle
\begin{abstract}
We study automated intrusion prevention using reinforcement learning. Following a novel approach, we formulate the interaction between an attacker and a defender as an optimal stopping game and let attack and defense strategies evolve through reinforcement learning and self-play. The game-theoretic perspective allows us to find defender strategies that are effective against dynamic attackers. The optimal stopping formulation gives us insight into the structure of optimal strategies, which we show to have threshold properties. To obtain the optimal defender strategies, we introduce \textsc{T-FP}, a fictitious self-play algorithm that learns Nash equilibria through stochastic approximation. We show that \textsc{T-FP} outperforms a state-of-the-art algorithm for our use case. Our overall method for learning and evaluating strategies includes two systems: a simulation system where defender strategies are incrementally learned and an emulation system where statistics are produced that drive simulation runs and where learned strategies are evaluated. We conclude that this approach can produce effective defender strategies for a practical IT infrastructure.
\end{abstract}

\begin{IEEEkeywords}
Network security, automation, optimal stopping, reinforcement learning, game theory, Markov decision process, Dynkin games, MDP, POMDP
\end{IEEEkeywords}

\IEEEpeerreviewmaketitle

\section{Introduction}
An organization's security strategy has traditionally been defined, implemented, and updated by domain experts \cite{int_prevention}. Although this approach can provide basic security for an organization's communication and computing infrastructure, a growing concern is that infrastructure update cycles become shorter and attacks increase in sophistication \cite{cyber_threat_landscape}. Consequently, the security requirements become increasingly difficult to meet. To address this challenge, significant efforts have started to automate security frameworks and the process of obtaining security strategies. Examples of this research include: automated creation of threat models \cite{mal_pontus}; computation of defender strategies using dynamic programming and control theory \cite{dp_security_1,Miehling_control_theoretic_approaches_summary}; computation of exploits and corresponding defenses through evolutionary methods \cite{armsrace_malware,hemberg_oreily_evo}; identification of infrastructure vulnerabilities through attack simulations and threat intelligence \cite{wagner_automated_segmentation, threat_intel_misp}; computation of defender strategies through game-theoretic methods \cite{nework_security_alpcan, serkan_gyorgy_game}; and use of machine learning techniques to estimate model parameters and strategies \cite{hammar_stadler_tnsm, hammar_stadler_cnsm_20, hammar_stadler_cnsm_21}.

\begin{figure}
  \centering
  \scalebox{0.93}{
    \input{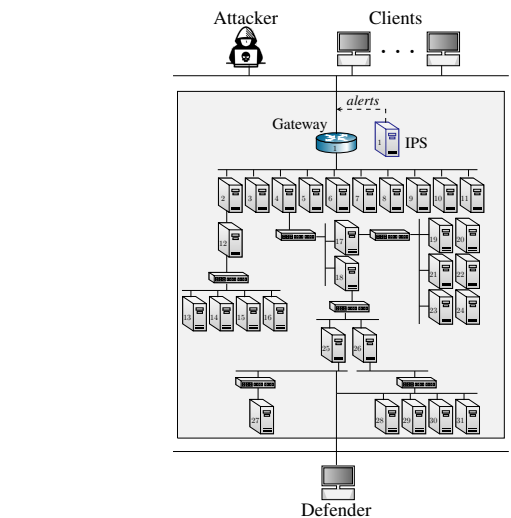}
    }
    \caption{The IT infrastructure and the actors in the use case.}
    \label{fig:system2}
  \end{figure}

In this paper, we present a novel approach to automatically learn security strategies for an attacker and a defender. We apply this approach to an \textit{intrusion prevention} use case, which involves the IT infrastructure of an organization (see Fig. \ref{fig:system2}). The operator of this infrastructure, which we call the defender, takes measures to protect it against a possible attacker while providing services to a client population. (We use the term "intrusion prevention'' as suggested in the literature, e.g. in \cite{int_prevention}, it means that the attacker is prevented from reaching its goal, rather than prevented from accessing any part of the infrastructure.)

We formulate the use case as an \textit{optimal stopping game}, i.e. a stochastic game where each player faces an optimal stopping problem \cite{wald,dynkin_orig_3,shirayev_change_point}. The stopping game formulation enables us to gain insight into the structure of optimal strategies, which we show to have threshold properties. To obtain effective defender strategies, we use reinforcement learning and self-play. Based on the threshold properties of optimal strategies, we design an efficient self-play algorithm that iteratively computes optimal defender strategies against a dynamic attacker.

Our method for learning and evaluating strategies includes building two systems (see Fig. \ref{fig:method}). First, we develop an \textit{emulation system} where key functional components of the target infrastructure are replicated. This system closely approximates the functionality of the target infrastructure and is used to run attack scenarios and defender responses. These runs produce system measurements and logs from which we estimate distributions of infrastructure metrics. We then use the estimated distributions to instantiate the simulation model. Second, we build a \textit{simulation system} where game episodes are simulated and strategies are incrementally learned. Learned strategies are then extracted from the simulation system and evaluated in the emulation system. (A video demonstration of our software framework that implements the emulation and simulation systems is available at \cite{fig2_video_demonstration}.)
\begin{figure}
  \centering
  \scalebox{0.95}{
    \input{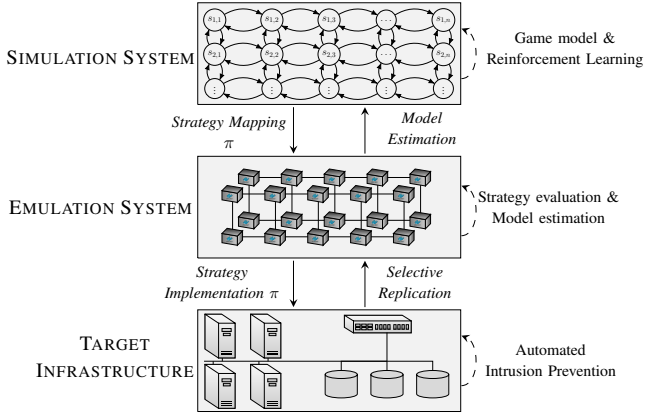}
 }
    \caption{Our approach for finding and evaluating intrusion prevention strategies.}
    \label{fig:method}
\end{figure}

We make three contributions with this paper. First, we formulate intrusion prevention as an optimal stopping game. This novel formulation allows us a) to derive structural properties of optimal strategies using results from optimal stopping theory; and b) to find defender strategies that are effective against attackers with dynamic strategies. We thus address a limitation of many related works that consider static attackers only \cite{oslo_pentest_rl,ridley_ml_defense,deep_hierarchical_rl_pentest,pentest_rl_rohit,deep_air,hammar_stadler_cnsm_21,adaptive_cyber_defense_pomdp_rl,kurt_rl}. Second, we propose \textsc{T-FP}, an efficient reinforcement learning algorithm that exploits structural properties of optimal stopping strategies and outperforms a state-of-the-art algorithm for our use case. Third, we provide evaluation results from an emulated infrastructure. This addresses a common drawback in related research, which relies on simulations to learn and evaluate strategies \cite{hammar_stadler_cnsm_20,hammar_stadler_cnsm_21, elderman, schwartz_2020, oslo_pentest_rl, kurt_rl, microsoft_red_teaming, ridley_ml_defense, rl_cyberdefense_heartbleed, deep_hierarchical_rl_pentest, pentest_rl_rohit, adaptive_cyber_defense_pomdp_rl,serkan_gyorgy_game,flipit,dynamic_game_linan_zhu,general_sum_markov_games_for_strategic_detection_of_apt,game_cyber_rl_sim,cmu_ppo_selfplay,al_shaer_book_ppo_simulation,mec_game_rl_q_learning_sim,nfsp_jamming_1_sim}.

\section{The Intrusion Prevention Use Case}\label{sec:use_case}
We consider an intrusion prevention use case that involves the IT infrastructure of an organization. The operator of this infrastructure, which we call the defender, takes measures to protect it against an attacker while providing services to a client population (Fig. \ref{fig:system2}). The infrastructure includes a set of servers that run the services and an Intrusion Prevention System (IPS) that logs events in real-time. Clients access the services through a public gateway, which also is open to the attacker.

The attacker's goal is to intrude on the infrastructure and compromise its servers. To achieve this, the attacker explores the infrastructure through reconnaissance and exploits vulnerabilities while avoiding detection by the defender. The attacker decides when to start an  intrusion and may stop the intrusion at any moment. During the intrusion, the attacker follows a pre-defined strategy. When deciding the time to start or stop an intrusion, the attacker considers both the gain of compromising additional servers and the risk of getting detected. The optimal strategy for the attacker is to compromise as many servers as possible without being detected.

The defender continuously monitors the infrastructure through accessing and analyzing IPS alerts and other statistics. It can take a fixed number of defensive actions, each of which has a cost and a chance of preventing an ongoing attack. An example of a defensive action is to drop network traffic that triggers IPS alerts of a certain priority. The defender takes defensive actions in a pre-determined order, starting with the action that has the lowest cost. The final action blocks all external access to the gateway, which disrupts any ongoing intrusion as well as the services to the clients.

When deciding the time for taking a defensive action, the defender balances two objectives: (\textit{i}) maintain services to its clients; and (\textit{ii}), prevent a possible intrusion at lowest cost. The optimal strategy for the defender is to monitor the infrastructure and maintain services until the moment when the attacker enters through the gateway, at which time the attack must be prevented at minimal cost through defensive actions. The challenge for the defender is to identify the precise time for this moment.

\section{Formalizing The Intrusion Prevention \\Use Case}\label{sec:formal_model_2}
We model the use case as a partially observed stochastic game. The attacker wins the game when it can intrude on infrastructure and hide its actions from the defender. In contrast, the defender wins the game when it manages to prevent an intrusion. We model this as a zero-sum game, which means that the gain of one player equals the loss of the other player.

The attacker and the defender have different observability in the game. The defender observes alerts from an Intrusion Prevention System (IPS) but has no certainty about the presence of an attacker or the state of a possible intrusion. The attacker, on the other hand, is assumed to have complete observability. It has access to all the information that the defender has access to, as well as the defender's past actions. The asymmetric observability requires the defender to find strategies that are effective against any attacker, including attackers with inside information about its monitoring capabilities.

The reward function of the game encodes the defender's objective. An optimal defender strategy \textit{maximizes} reward when facing a worst-case attacker. Similarly, an optimal attacker strategy \textit{minimizes} reward when facing a worst-case defender. In game-theoretical terms, this means that a pair of optimal strategies is a Nash equilibrium \cite{nash51}.

Formally, we model the game as a finite and zero-sum Partially Observed Stochastic Game (POSG) with one-sided partial observability: $\Gamma = \langle \mathcal{N}, \mathcal{S}, (\mathcal{A}_i)_{i \in \mathcal{N}},$ $\mathcal{T}, (\mathcal{R}_{i})_{i \in \mathcal{N}}, \gamma,  \rho_1, T, (\mathcal{O}_i)_{i \in \mathcal{N}}, \mathcal{Z} \rangle$. It is a discrete-time game that starts at time $t=1$. In the following, we describe the components of the game, its evolution, and the objectives of the players.

\textbf{Players $\mathcal{N}$.} The game has two players: player $1$ is the defender and player $2$ is the attacker. Hence, $\mathcal{N}=\{1,2\}$.

\textbf{State space $\mathcal{S}$.} The game has three states: $s_t=0$ if no intrusion is occurring, $s_t=1$ if an intrusion is ongoing, and $s_t=\emptyset$ if the game has ended. Hence, $\mathcal{S}=\{0,1,\emptyset\}$. The initial state is $s_1=0$ and the initial state distribution is the degenerate distribution $\rho_1(0)=1$.

\textbf{Action spaces $\mathcal{A}_i$.} Each player $i\in \mathcal{N}$ can invoke two actions: ``stop'' ($S$) and ``continue'' ($C$). The action spaces are thus $\mathcal{A}_1=\mathcal{A}_2=\{S,C\}$. $S$ results in a change of state and $C$ means that the game remains in the same state. We encode $S$ with $1$ and $C$ with $0$.

The attacker can invoke the stop action two times: the first time to start the intrusion and the second time to stop it. The defender can invoke the stop action $L \geq 1$ times. Each stop of the defender can be interpreted as a defensive action against a possible intrusion. The number of stops remaining of the defender at time-step $t$ is known to both the attacker and the defender and is denoted by $l_t \in \{1,\hdots,L\}$.

At each time-step, the attacker and the defender simultaneously choose an action each: $\bm{a}_t = (a^{(1)}_t, a^{(2)}_t)$, where $a^{(i)}_t \in \mathcal{A}_i$.

\textbf{Observation space $\mathcal{O}$.} The attacker has complete information and knows the game state, the defender's actions, and the defender's observations. The defender, however, only sees the observations $o_t \in \mathcal{O}$, where $\mathcal{O}$ is a discrete set. (In our use case, $o_t$ relates to the number of IPS alerts during time-step $t$.)

Both players have perfect recall, meaning that they remember their respective play history. The history of the defender at time-step $t$ is $h^{(1)}_t$$=(\rho_1$, $a^{(1)}_{1}$, $o_{1}$, $\hdots$, $a^{(1)}_{t-1}$, $o_{t})$ and the history of the attacker is $h^{(2)}_t$$=(\rho_1$, $a^{(1)}_{1}$, $a^{(2)}_{1}$, $o_{1}$, $s_1$,$\hdots$, $a^{(1)}_{t-1}$, $a^{(2)}_{t-1}$, $o_{t}$, $s_{t})$.

\textbf{Belief space $\mathcal{B}$.} Based on its history $h^{(1)}_t$, the defender forms a belief about $s_t$, which is expressed in the \textit{belief state} $b_t(s_t)=\mathbb{P}[s_t|h^{(1)}_t] \in \mathcal{B}$. Since $s_{t} \in \{0,1\}$ and $b_{t}(0) = 1-b_t(1)$, $b_{t}$ is determined by $b_{t}(1)$. Hence, we can model $\mathcal{B} = [0,1]$.

\textbf{Transition probabilities $\mathcal{T}$.} At each time-step $t$, a state transition occurs. The probabilities of the state transitions are defined by $\mathcal{T}_{l_t}$$\big($$s_{t+1}$, $s_t$, $(a^{(1)}_t$, $a^{(2)}_t)\big)$ $= \mathbb{P}_{l_t}$$\big[$$s_{t+1}|$ $s_t$, $(a^{(1)}_t, a^{(2)}_t)$$\big]$:
\begin{align}
&\mathcal{T}_{l_t>1}\big(0,0,(S,C)\big)=\mathcal{T}_{l_t}\big(0,0,(C,C)\big)=1\label{eq:tp_1}\\
&\mathcal{T}_{l_t>1}\big(1,1,(\cdot,C)\big)=\mathcal{T}_{l_t}\big(1,1,(C,C)\big) = 1-\phi_{l_t}  \label{eq:tp_2}\\
&\mathcal{T}_{l_t>1}\big(1,0,(\cdot,S)\big)=\mathcal{T}_{l_t}\big(1,0,(C,S)\big) = 1\label{eq:tp_3}\\
&\mathcal{T}_{l_t>1}\big(\emptyset,1,(\cdot,C)\big)=\mathcal{T}_{l_t}\big(\emptyset,1,(C,C)\big)=\phi_{l_t}\label{eq:tp_4}\\
&\mathcal{T}_{1}\big(\emptyset,\cdot,(S,\cdot)\big)=\mathcal{T}_{l_t}(\emptyset,\emptyset,\cdot)=\mathcal{T}_{l_t}(\emptyset,1,(\cdot, S))=1\label{eq:tp_7}
\end{align}
All other state transitions have probability $0$.

Eqs. \ref{eq:tp_1}-\ref{eq:tp_2} define the probabilities of the recurrent state transitions $0\rightarrow 0$ and $1\rightarrow 1$. The game stays in state $0$ with probability $1$ if the attacker selects action $C$ and $l_t-a_t^{(1)}>0$. Similarly, the game stays in state $1$ with probability $1-\phi_{l_t}$ if the attacker chooses action $C$ and $l_t-a_t^{(1)}>0$. $\phi_{l_t}$ is a parameter of the use case that defines the probability that the intrusion is prevented, which increases with each stop action that the defender takes.

Eq. \ref{eq:tp_3} captures the transition $0 \rightarrow 1$, which occurs when the attacker chooses action $S$ and $l_t-a_t^{(1)}>0$. Eqs. \ref{eq:tp_4}-\ref{eq:tp_7} define the probabilities of the transitions to the terminal state $\emptyset$. The terminal state is reached in three cases: (\textit{i}) when $l_t=1$ and the defender takes the final stop action $S$ (i.e. when $l_t-a^{(1)}_t=0$); (\textit{ii}) when the intrusion is prevented with probability $\phi_{l_t}$; and \textit{(iii)}, when $s_t=1$ and the attacker stops ($a^{(2)}_t=1$).

The evolution of the game can be described with the state transition diagram in Fig. \ref{fig:state_transitions}. The figure describes a game \textit{episode}, which starts at $t=1$ and ends at $t=T$. The time horizon $T$ is a random variable that depends on both players' strategies and takes values in the set $\{t=2,3,\hdots,\infty\}$.

\begin{figure}
  \centering
  \scalebox{1.05}{
    \begin{tikzpicture}[fill=white, >=stealth,
    node distance=3cm,
    database/.style={
      cylinder,
      cylinder uses custom fill,
      shape border rotate=90,
      aspect=0.25,
      draw}]

\node[scale=0.8] (kth_cr) at (0,2.15)
{
  \begin{tikzpicture}

\node[scale=1] (level1) at (-1.7,-5.6)
{
  \begin{tikzpicture}
\node[draw,circle, minimum width=15mm, scale=0.6](s0) at (0,0) {};
\node[draw,circle, minimum width=15mm, scale=0.6](s1) at (4,0) {};
\node[draw,circle, minimum width=15mm, scale=0.6](s2) at (2,-2.5) {};
\node[draw,circle, minimum width=15mm, scale=0.5](s4) at (2,-2.5) {};

\node[inner sep=0pt,align=center, scale=1.1] (time) at (0.07,0)
{
$0$
};

\node[inner sep=0pt,align=center, scale=1.1] (time) at (4.07,0)
{
$1$
};

\node[inner sep=0pt,align=center, scale=1.1] (time) at (2.07,-2.5)
{
$\emptyset$
};

\node[inner sep=0pt,align=center, scale=1.1] (time) at (-0.8,1.25)
{
$t\geq 1$\\
$l_t > 0$
};

\node[inner sep=0pt,align=center, scale=1.1] (time) at (4,1.25)
{
$t\geq 2$\\
$l_t>0$
};

\node[inner sep=0pt,align=center, scale=1.1] (time) at (2,0.5)
{
$a^{(2)}_t=S$
};

\node[inner sep=0pt,align=center, scale=1.1,rotate=0] (time) at (2.05,-1.2)
{
  $l_t=1$\\
  $a_t^{(1)}=S$
};

\node[inner sep=0pt,align=center, scale=1.1,rotate=55] (time) at (4.25,-1.35)
{
  prevented w.p $\phi_{l_t}$\\
  or stopped ($a^{(2)}_t=S$)
};

\draw[thick,-{Latex[length=2mm]}] (s0) to (s1);
\draw[thick,-{Latex[length=2mm]}] (s1) to (s2);
\draw[thick,-{Latex[length=2mm]}] (s0) to (s2);

\draw[thick,-{Latex[length=2mm]}, bend left=30] (s1) to (s2);

\draw[thick,-{Latex[length=2mm]}] (s0.90) arc (0:260:3.5mm);

\draw[thick,-{Latex[length=2mm]}] (s1.90) arc (0:260:3.5mm);

    \end{tikzpicture}
  };
    \end{tikzpicture}
  };

\end{tikzpicture}
    }
    \caption{State transition diagram of a game episode: each disk represents a state; an arrow represents a state transition; a label indicates the conditions for the state transition; a game episode starts in state $s_1=0$ with $l_1=L$ and ends in state $s_T=\emptyset$.}
    \label{fig:state_transitions}
  \end{figure}
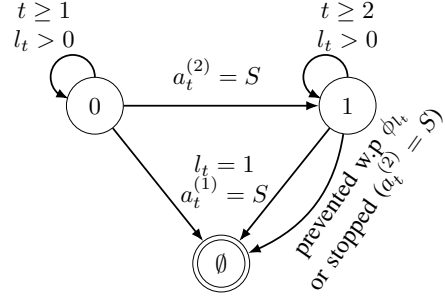

\textbf{Reward function $\mathcal{R}_{l_t}$.} At time-step $t$, the defender receives the reward $r_t = \mathcal{R}_{l_t}(s_t,(a^{(1)}_t, a^{(2)}_t))$ and the attacker receives the reward $-r_t$. The reward function is parameterized by the reward that the defender receives for stopping an intrusion ($R_{st} > 0$), the defender's cost of taking a defensive action ($R_{cost}< 0$), and its loss when being intruded ($R_{int}< 0$):
\begin{align}
&\mathcal{R}_{l_t}(\emptyset, \cdot)=0, \quad \quad \mathcal{R}_{l_t}\big(1, (\cdot,S)\big) = 0 \label{eq:reward_0}\\
&\mathcal{R}_{l_t}\big(0, (C,\cdot)\big)=0 \label{eq:reward_2}\\
&\mathcal{R}_{l_t}\big(0, (S,\cdot)\big)=R_{cost}/l_t    \label{eq:reward_3}\\
&\mathcal{R}_{l_t}\big(1, (S,C)\big)=R_{st}/l_t  \label{eq:reward_4}\\
&\mathcal{R}_{l_t}\big(1, (C,C)\big)=R_{int}  \label{eq:reward_5}
\end{align}
Eq. \ref{eq:reward_0} states that the reward is zero in the terminal state and when the attacker ends an intrusion. Eq. \ref{eq:reward_2} states that the defender incurs no cost when it is not under attack and not taking defensive actions. Eq. \ref{eq:reward_3} indicates that the defender incurs a cost when stopping if no intrusion is ongoing, which is decreasing with the number of stops remaining $l_t$. Eq. \ref{eq:reward_4} states that the defender receives a reward that is decreasing in $l_t$ when taking a stop action that affects an ongoing intrusion. Lastly, Eq. \ref{eq:reward_5} indicates that the defender receives a loss for each time-step when under intrusion.

\textbf{Observation function $\mathcal{Z}$.} At time-step $t$, $o_t \in \mathcal{O}$ is drawn from a random variable $O$ whose distribution $f_{O}$ depends on the current state $s_{t}$. We define $\mathcal{Z}(o_{t},s_t,(a^{(1)}_{t-1}, a^{(2)}_{t-1}))$$=\mathbb{P}[o_t|s_t,(a^{(1)}_{t-1}, a^{(2)}_{t-1})]$ as follows:
\begin{align}
&\mathcal{Z}\big(o_t,0,\cdot \big) = f_{O}(o_t |0) \label{eq:obs_1}\\
&\mathcal{Z}\big(o_t,1,\cdot \big) = f_{O}(o_t |1)\label{eq:obs_2}\\
&\mathcal{Z}\big(\emptyset,\emptyset,\cdot\big) = 1 \label{eq:obs_3}
\end{align}\normalsize

\textbf{Belief update.} At time-step $t$, the belief state $b_{t}$ is updated as follows:
\begin{align}
&b_{t+1}(s_{t+1}) = C \sum_{s_t \in \mathcal{S}}\sum_{a^{(2)}_t \in \mathcal{A}_2}\sum_{o_{t+1} \in \mathcal{O}}b_t(s_t)\pi_{2,l}(a^{(2)}_{t}|s_t,b_t)\cdot \nonumber\\
  &\mathcal{Z}(o_{t+1},s_{t+1},(a^{(1)}_{t},a^{(2)}_t)) \mathcal{T}\big(s_{t+1},s_t,(a^{(1)}_{t},a^{(2)}_t)\big)\label{eq:belief_upd}
\end{align}
where $C=1/\mathbb{P}[o_{t+1}|a^{(1)}_1,\pi_{2,l}, b_t]$ is a normalizing factor that makes $b_{t+1}$ sum to $1$. The initial belief is $b_1(0)=1$.

\textbf{Player strategies $\pi_{i,l}$.} A strategy of the defender is a function $\pi_{1,l} \in \Pi_1: \mathcal{B} \rightarrow \Delta(\mathcal{A}_1)$. Analogously, a strategy of the attacker is a function $\pi_{2,l} \in \Pi_2: \mathcal{S} \times \mathcal{B} \rightarrow \Delta(\mathcal{A}_2)$. $\Delta(\mathcal{A}_i)$ denotes the set of probability distributions over $\mathcal{A}_i$, $\Pi_i$ denotes the strategy space of player $i$, and $\pi_{-i,l}$ denotes the strategy of player $j\in \mathcal{N}\setminus \{i\}$. For both players, a strategy is dependent on $l$ but independent of $t$, i.e. strategies are stationary. If $\pi_{i,l}$ always maps on to an action with probability $1$, it is called \textit{pure}, otherwise it is called \textit{mixed}.

\textbf{Objective functions $J_i$.} The goal of the defender is to \textit{maximize} the expected discounted cumulative reward over the time horizon $T$. Similarly, the goal of the attacker is to \textit{minimize} the same quantity. Therefore, the objective functions $J_1$ and $J_2$ are:
\begin{align}
J_1(\pi_{1,l}, \pi_{2,l}) &= \mathbb{E}_{(\pi_{1,l}, \pi_{2,l})}\left[\sum_{t=1}^{T}\gamma^{t-1}\mathcal{R}_{l_t}(s_t, \bm{a}_t)\right] \label{eq:objective_1}\\
J_2(\pi_{1,l}, \pi_{2,l}) &= -J_1(\pi_{1,l}, \pi_{2,l}) \label{eq:objective_2}
\end{align}
where $\gamma \in [0,1)$ is the discount factor.

\textbf{Best response strategies $\tilde{\pi}_{i,l}$.} A defender strategy $\tilde{\pi}_{1,l} \in B_1(\pi_{2,l})$ is called a \textit{best response} against $\pi_{2,l}\in \Pi_{2}$ if it \textit{maximizes} $J_1$ (Eq. \ref{eq:br_defender}). Similarly, an attacker strategy $\tilde{\pi}_{2,l} \in B_2(\pi_{1,l})$ is called a best response against $\pi_{1,l} \in \Pi_1$ if it \textit{minimizes} $J_1$ (Eq. \ref{eq:br_attacker}).
\begin{align}
B_1(\pi_{2,l}) &= \argmax_{\pi_{1,l} \in \Pi_1}J_1(\pi_{1,l}, \pi_{2,l})\label{eq:br_defender}\\
B_2(\pi_{1,l}) &= \argmin_{\pi_{2,l} \in \Pi_2}J_1(\pi_{1,l}, \pi_{2,l})\label{eq:br_attacker}
\end{align}

\textbf{Optimal strategies $\pi^{*}_{i,l}$.} An optimal defender strategy $\pi_{1,l}^{*}$ is a best response strategy against any attacker strategy that \textit{minimizes} $J_1$. Similarly, an optimal attacker strategy $\pi_{2,l}^{*}$ is a best response against any defender strategy that \textit{maximizes} $J_1$. Hence, when both players follow optimal strategies, they play best response strategies against each other:
\begin{align}
(\pi_{1,l}^{*}, \pi_{2,l}^{*}) \in B_1(\pi_{2,l}^{*}) \times B_2(\pi_{1,l}^{*})\label{eq:minmax_objective}
\end{align}
This means that no player has an incentive to change its strategy and that $(\pi_{1,l}^{*},\pi_{2,l}^{*})$ is a Nash equilibrium \cite{nash51}.

\section{Game-Theoretic Analysis and Our Approach for Finding Optimal Defender Strategies}\label{sec:game_analysis}
  Finding optimal strategies that satisfy Eq. \ref{eq:minmax_objective} means finding a Nash equilibrium for the POSG $\Gamma$. We know from game theory that $\Gamma$ has at least one mixed Nash equilibrium \cite{vonNeumann_1928:TGG,nash51, Shapley1095,posg_equilibria_existence_finite_horizon}. (A Nash equilibrium is called mixed if one or more players follow mixed strategies.)

The equilibria of $\Gamma$ can be obtained by finding pairs of strategies that are best responses against each other (Eq. \ref{eq:minmax_objective}). A best response for the defender is obtained by solving a POMDP $\mathcal{M}^{P}$, and a best response for the attacker is obtained by solving an MDP $\mathcal{M}$. Hence, the best response strategies can be expressed with $Q$-functions:
\begin{align}
&B_1(\pi_{2,l})=\argmax_{a^{(1)}_t \in \mathcal{A}_1} Q_{1,\pi_{2,l}}^*(b_t,a^{(1)}_t)\\
&B_2(\pi_{1,l})=\argmin_{a^{(2)}_t \in \mathcal{A}_2} Q_{2,\pi_{1,l}}^*((b_t,s_t),a^{(2)}_t)
\end{align}
The corresponding Bellman equations are \cite{bellman1957markovian}:
\begin{align}
&Q_{i,\pi}^{*}(x_t,a^{(i)}_t) = \mathop{\mathbb{E}}_{\pi,x_t,a^{(i)}_t}\big[r_{t+1} + \gamma V_{i,\pi}^{*}(x_{t+1})\big] \label{eq:bellman_eq_41}\\
  &V_{1,\pi}^{*}(x_t) = \max_{a^{(1)}_t\in \mathcal{A}_1} \mathop{\mathbb{E}}_{\pi,x_t,a^{(1)}_t}\big[r_{t+1} + \gamma V_{1,\pi}^{*}(x_{t+1})\big]\label{eq:bellman_eq_42}
\end{align}
\begin{align}
  &V_{2,\pi}^{*}(x_t) = \min_{a^{(2)}_t\in \mathcal{A}_2} \mathop{\mathbb{E}}_{\pi,a^{(2)}_t}\big[r_{t+1} + \gamma V_{2,\pi}^{*}(x_{t+1})\big]\label{eq:bellman_eq_43}\\
&V^{*}(b) = \max_{\pi_{1,l} \in \Delta(\mathcal{A}_1)}\min_{\pi_{2,l} \in \Delta(\mathcal{A}_2)} \mathbb{E}_{\pi_{1,l},\pi_{2,l},b}\big[r^{(1)}_{t+1} + \gamma V^{*}(b_{t+1})\big]  \label{eq:bellman_posg_1}
\end{align}
\normalsize
Since the game is zero-sum, stationary, and $\gamma < 1$, it follows from game theory that $V^{*}(b) = V_{1,\pi^{*}_{2,l}}^{*}(b) = V_{2,\pi^{*}_{1,l}}^{*}(b,s)$ \cite{vonNeumann_1928:TGG,horak_solving_one_sided_posgs}. Further, from Markov decision theory we know that for any strategy pair ($\pi_{1,l}, \pi_{2,l}$), a corresponding pair of best response strategies $(\tilde{\pi}_{1,l} \in B_1(\pi_{2,l}), \tilde{\pi}_{2,l} \in B_2(\pi_{1,l}))$ exists \cite{puterman}.
\begin{figure}
  \centering
    \scalebox{1.3}{
      \includegraphics{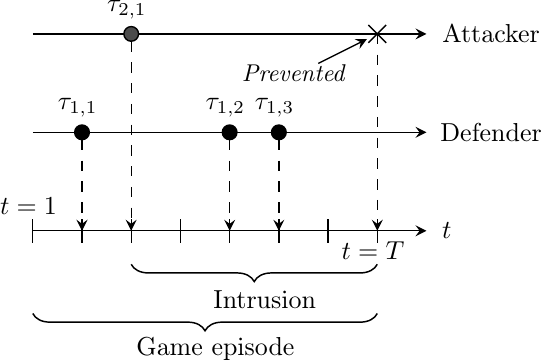}
    }
    \caption{Stopping times of the defender and the attacker in a game episode; the lower horizontal axis represents time; the black circles on the middle axis and the upper axis represent time-steps of defender stop actions and attacker stop actions, respectively; $\tau_{i,j}$ denotes the $j$th stopping time of player $i$; the cross shows the time the intrusion is prevented; an episode ends either when the attacker is prevented or when it takes its second stop action.}
    \label{fig:stopping_times}
  \end{figure}
\subsection{Analyzing Best Responses using Optimal Stopping Theory}
The POMDP $\mathcal{M}^{P}$ and the MDP $\mathcal{M}$ that determine the best response strategies can be understood as \textit{optimal stopping} problems (see Fig. \ref{fig:stopping_times}) \cite{wald,stopping_book_1,chow1971great,hammar_stadler_tnsm}. In the defender's case, the problem is to find a stopping strategy $\pi_{1,l}^{*}(b_t) \rightarrow \{S,C\}$ that maximizes $J_1$ (Eq. \ref{eq:objective_1}) and prescribes the optimal stopping times $\tau^{*}_{1,1},\tau^{*}_{1,2},\hdots, \tau^{*}_{1,L}$. Similarly, the problem for the attacker is to find a stopping strategy $\pi_{2,l}^{*}(s_t,b_t) \rightarrow \{S,C\}$ that minimizes $J_1$ (Eq. \ref{eq:objective_2}) and prescribes the optimal stopping times $\tau^{*}_{2,1}$ and $\tau^{*}_{2,2}$.

Given a pair of stopping strategies $(\pi_{1,l},\pi_{2,l})$ and their best responses $(\tilde{\pi}_{1,l}\in B_1(\tilde{\pi}_{2,l}), \tilde{\pi}_{2,l}\in B_2(\tilde{\pi}_{1,l}))$, we define two subsets of $\mathcal{B}$: the \textit{stopping sets} and the \textit{continuation sets}.

The stopping sets contain the belief states where $S$ is a best response: $\mathscr{S}^{(1)}_{l,\pi_{2,l}} = \{b(1) \in [0,1] : \tilde{\pi}_{1,l}\big(b(1)\big) = S\}$ and $\mathscr{S}^{(2)}_{s,l,\pi_{1,l}} = \{b(1) \in [0,1] : \tilde{\pi}_{2,l}\big(s,b(1)\big) = S\}$. Similarly, the continuation sets contain the belief states where $C$ is a best response: $\mathscr{C}^{(1)}_{l,\pi_{2,l}} = \{b(1) \in [0,1] : \tilde{\pi}_{1,l}\big(b(1)\big) = C\}$ and $\mathscr{C}^{(2)}_{s,l,\pi_{1,l}} = \{b(1) \in [0,1] : \tilde{\pi}_{2,l}\big(s, b(1)\big) = C\}$.

Based on \cite{krishnamurthy_2016,Nakai1985,optimal_multiple_stopping_social_media_1,hammar_stadler_tnsm,horak_solving_one_sided_posgs}, we formulate Theorem \ref{thm:best_responses}, which contains an existence result for equilibria and a structural result for best response strategies in the game.
\begin{theorem}\label{thm:best_responses}
Given the one-sided POSG $\Gamma$ in Section \ref{sec:formal_model_2} with $L \geq 1$, the following holds.
\begin{enumerate}[(A)]
\item $\Gamma$ has a mixed Nash equilibrium. Further, $\Gamma$ has a pure Nash equilibrium when $s=0 \iff b(1)=0$.
\item
Given any attacker strategy $\pi_{2,l} \in \Pi_2$, if the probability mass function $f_{O|s}$ is totally positive of order 2 (i.e., TP2 \cite[Definition 10.2.1, pp. 223]{krishnamurthy_2016}), there exist values $\tilde{\alpha}_{1}$ $\geq$ $\tilde{\alpha}_{2}$ $\geq$ $\hdots$ $\geq$ $\tilde{\alpha}_L \in [0,1]$ and a best response strategy $\tilde{\pi}_{1,l} \in B_1(\pi_{2,l})$ of the defender that satisfies:
\begin{align}
\tilde{\pi}_{1,l}(b(1)) = S \iff b(1) \geq \tilde{\alpha}_l \quad\quad l\in 1,\hdots,L \label{eq:prop_br_defender}
\end{align}
\item Given a defender strategy $\pi_{1,l}\in \Pi_1$, where $\pi_{1,l}(S|b(1))$ is non-decreasing in $b(1)$ and $\pi_{1,l}(S|1)=1$, there exist values $\tilde{\beta}_{0,1},$ $\tilde{\beta}_{1,1},$ $\hdots$, $\tilde{\beta}_{0,L}$, $\tilde{\beta}_{1,L} \in [0,1]$ and a best response strategy $\tilde{\pi}_{2,l} \in B_2(\pi_{1,l})$ of the attacker that satisfies:
  \begin{align}
\tilde{\pi}_{2,l}(0,b(1)) = C \iff \pi_{1,l}(S|b(1)) \geq \tilde{\beta}_{0,l} \label{eq:prop_br_attacker_1}\\
\tilde{\pi}_{2,l}(1,b(1)) = S \iff \pi_{1,l}(S|b(1)) \geq \tilde{\beta}_{1,l} \label{eq:prop_br_attacker_2}
\end{align}
for $l \in 1,\hdots, L$.
\end{enumerate}
\end{theorem}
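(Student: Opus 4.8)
The plan is to prove the three parts largely separately: (A) from minimax/fixed-point arguments, and (B)--(C) from the structural theory of monotone POMDPs and monotone MDPs, exactly as foreshadowed in Section~\ref{sec:game_analysis}. For (A) I would first recall that fixing $\pi_{2,l}$ reduces the defender's problem to the POMDP $\mathcal{M}^{P}$, whose belief-MDP lives on the compact set $\mathcal{B}=[0,1]$, and that fixing $\pi_{1,l}$ reduces the attacker's problem to the fully observed, finite-action MDP $\mathcal{M}$; both admit optimal stationary strategies (Puterman). Existence of a mixed Nash equilibrium then follows by applying the minimax theorem to $\Gamma$ in the manner of the cited results on one-sided POSGs and zero-sum stochastic games: since $\gamma<1$, the game is stationary, and the action sets are finite, the upper and lower values of $\Gamma$ coincide and a stationary (possibly mixed) saddle point exists. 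For the pure-equilibrium claim, I would observe that $s=0 \iff b(1)=0$ forces the defender's belief $b_t$ to determine $s_t$, so $\Gamma$ collapses to a fully observed zero-sum stochastic game; a pure stationary equilibrium then exists, either by a Shapley-type argument or, more constructively, by composing the pure threshold best responses obtained in (B) and (C) via a fixed-point argument on the compact, finite-dimensional set of threshold vectors.

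For (B), fix $\pi_{2,l}$ and work with the belief-MDP of $\mathcal{M}^{P}$, state $(b(1),l)$. The steps are: (i) show the optimal value function $V^{*}_{1,l}(\cdot)$ is convex on $[0,1]$ by the usual vector-representation/contraction argument for discounted POMDPs; (ii) use the TP2 assumption on $f_{O\mid s}$ together with the simple $\emptyset$-augmented transition kernel of $\Gamma$ to show the Bayesian belief update is monotone in the MLR order, so that --- since for a binary hidden state the MLR order on $\mathcal{B}$ is just the natural order on $b(1)$ --- $V^{*}_{1,l}$ is non-decreasing in $b(1)$ and the one-step advantage $A_{l}(b(1)):=Q^{*}_{1,\pi_{2,l}}(b(1),S)-Q^{*}_{1,\pi_{2,l}}(b(1),C)$ is non-decreasing in $b(1)$ (here the stopping reward, which rises in $b(1)$ from $R_{cost}/l$ toward $R_{st}/l$, must be shown to grow at least as fast as the continuation value); (iii) conclude that the stopping set $\mathscr{S}^{(1)}_{l,\pi_{2,l}}=\{b(1):A_{l}(b(1))\geq 0\}$ is an up-set $[\tilde{\alpha}_{l},1]$, which is \eqref{eq:prop_br_defender}. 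The nesting $\tilde{\alpha}_{1}\geq\cdots\geq\tilde{\alpha}_{L}$ I would obtain by induction on $l$, comparing $V^{*}_{1,l}$ with $V^{*}_{1,l-1}$ and exploiting the $1/l$ scaling of $R_{cost}$ and $R_{st}$ to show $A_{l}(b(1))$ is pointwise non-decreasing in $l$, so its zero crossing moves left.

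For (C), fix $\pi_{1,l}$ with $\pi_{1,l}(S\mid b(1))$ non-decreasing in $b(1)$ and $\pi_{1,l}(S\mid 1)=1$, and analyze the attacker's MDP $\mathcal{M}$ with fully observed state $(s,b(1))$. The key point is that $b(1)$ enters the attacker's transition and reward kernels only through the defender's stopping probability $p=\pi_{1,l}(S\mid b(1))$ and the exogenous (monotone) belief dynamics, so after reparametrizing in $p$ one shows the attacker's value function and its advantage are monotone in $b(1)$. In state $s=1$, continuing accrues the per-step gain $-R_{int}>0$ but risks the defender stopping (which costs the attacker $R_{st}/l$) and prevention (probability $\phi_{l}$), so stopping becomes optimal exactly when $p\geq\tilde{\beta}_{1,l}$, i.e.\ \eqref{eq:prop_br_attacker_2}, with $\pi_{1,l}(S\mid 1)=1$ pinning the boundary behaviour at $b(1)=1$; in state $s=0$, starting the intrusion pays off only when the defender is unlikely to act, which yields the complementary threshold $\tilde{\beta}_{0,l}$ of \eqref{eq:prop_br_attacker_1}.

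The main obstacle is step (ii) of (B) and its analogue in (C): showing that the TP2 property of the observation kernel survives the belief update \eqref{eq:belief_upd} --- which additionally mixes over the attacker's action via $\pi_{2,l}$ --- and the coupled terminal-state transitions \eqref{eq:tp_1}--\eqref{eq:tp_7}, so that MLR-monotonicity of the value function holds for \emph{every} $\pi_{2,l}$, and dually that any monotone $\pi_{1,l}$ really does render the attacker's advantage monotone in $b(1)$ despite the $l$-dependence. The threshold-nesting claims ($\tilde{\alpha}_l$ decreasing in $l$, and the ordering of the $\tilde{\beta}$'s) are the other delicate point and require the careful $1/l$-scaling induction sketched above.
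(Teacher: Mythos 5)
Your plan for the mixed equilibrium in (A) and your general direction for (B) and (C) are in line with the paper: the mixed equilibrium is indeed obtained by invoking the existence results for finite discounted one-sided POSGs, part (B) is exactly the monotone-POMDP/TP2/MLR argument (which the paper delegates to prior work, \cite{hammar_stadler_tnsm}), and part (C) is proved in the paper, as you suggest, by showing that the attacker's value functions and their difference are monotone in the defender's stop probability $\pi_{1,l}(S|b(1))$ (via a chain of lemmas: nonnegativity of the value, monotonicity in $\pi_{1,l}(S|b(1))$ and in $l$, a TP2 lemma bounding the expected continuation value, exclusion of simultaneous stopping in both states, and monotonicity of $V^{*}(0,\cdot)-V^{*}(1,\cdot)$). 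The hard steps you flag as open (monotone advantage under an arbitrary $\pi_{2,l}$, the TP2 propagation through the belief update, the threshold orderings) are precisely what those cited and in-paper lemmas supply, so for (B)--(C) your proposal is a correct skeleton rather than a complete argument.

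The genuine gap is your argument for the \emph{pure} equilibrium in (A). Observing that $s=0 \iff b(1)=0$ makes the game effectively fully observed does not yield a pure stationary equilibrium: zero-sum stochastic games with perfect state observation in general only admit mixed (behavioral) stationary saddle points -- a single-state matching-pennies game already has no pure equilibrium -- so ``a Shapley-type argument'' cannot deliver purity. Your alternative, a fixed-point argument over the compact set of threshold vectors composing the best responses of (B) and (C), also does not go through as stated: compactness alone gives no fixed point for the (generally discontinuous, set-valued) pure best-response map, and, moreover, (B) requires the TP2 hypothesis on $f_{O|s}$, which part (A) does not assume, so (A) cannot be made to rest on (B). The paper's proof is instead a direct construction exploiting the reward structure under the degeneracy condition: when $b(1)>0$ certifies $s=1$, the pure defender strategy ``continue at $b(1)=0$, stop whenever $b(1)>0$'' is a best response against \emph{every} attacker strategy (it never pays the false-alarm cost $R_{cost}/l$ and collects $R_{st}/l$ rather than $R_{int}$ as soon as an intrusion exists), and against this strategy the pure attacker strategy ``$C$ in $s=0$, $S$ in $s=1$'' is a best response; the pair is therefore a pure Nash equilibrium satisfying Eq.~\ref{eq:minmax_objective}. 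Your proposal is missing this dominance-style construction, and without it the pure-equilibrium claim is unsupported.
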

\begin{proof}[Proof.]
See Appendices \ref{appendix:theorem_1_a}-\ref{appendix:theorem_1_c}.
\end{proof}
\begin{figure}
  \centering
  \scalebox{1.15}{
    \begin{tikzpicture}[fill=white, >=stealth,
    node distance=3cm,
    database/.style={
      cylinder,
      cylinder uses custom fill,
      shape border rotate=90,
      aspect=0.25,
      draw}]

    \tikzset{
node distance = 9em and 4em,
sloped,
   box/.style = {%
    shape=rectangle,
    rounded corners,
    draw=blue!40,
    fill=blue!15,
    align=center,
    font=\fontsize{12}{12}\selectfont},
 arrow/.style = {%
    line width=0.1mm,
    -{Triangle[length=5mm,width=2mm]},
    shorten >=1mm, shorten <=1mm,
    font=\fontsize{8}{8}\selectfont},
}

\node[scale=1] (system) at (0,0)
{
\begin{tikzpicture}
\draw[->, color=black] (0.0,0) to (6,0);

\node[inner sep=0pt,align=center, scale=0.8] (time) at (6.3,0)
{
  $b(1)$
};

\node[inner sep=0pt,align=center, scale=0.8] (time) at (0.05,-0.3)
{
$0$
};

\node[inner sep=0pt,align=center, scale=0.8] (time) at (5.75,-0.3)
{
$1$
};

\draw[-, color=black] (5.7,0.1) to (5.7,-0.1);

\draw[-, color=black] (0,0.1) to (0,-0.1);

\draw [decorate,decoration={brace,amplitude=5pt,mirror,raise=4pt},yshift=0pt,rotate=180, line width=0.20mm]
(-5.65,0.1) -- (-4.35,0.1) node [black,midway,xshift=0.1cm] {};

\node[inner sep=0pt,align=center, scale=0.8] (time) at (5.1,0.4)
{
$\mathscr{S}^{(1)}_{1,\pi_{2,l}}$
};

\draw [decorate,decoration={brace,amplitude=5pt,mirror,raise=4pt},yshift=0pt,rotate=180, line width=0.20mm]
(-5.65,-0.4) -- (-3.5,-0.4) node [black,midway,xshift=0.1cm] {};

\node[inner sep=0pt,align=center, scale=0.8] (time) at (4.8,0.9)
{
$\mathscr{S}^{(1)}_{2,\pi_{2,l}}$
};

\node[inner sep=0pt,align=center, scale=0.8] (time) at (4.1,1.25)
{
$\vdots$
};

\draw [decorate,decoration={brace,amplitude=5pt,mirror,raise=4pt},yshift=0pt,rotate=180, line width=0.20mm]
(-5.65,-1.18) -- (-2.3,-1.18) node [black,midway,xshift=0.1cm] {};

\node[inner sep=0pt,align=center, scale=0.8] (time) at (4.15,1.7)
{
$\mathscr{S}^{(1)}_{L,\pi_{2,l}}$
};

\draw[-, color=black] (4.3,0.1) to (4.3,-0.1);
\draw[-, color=black] (3.48,0.1) to (3.48,-0.1);

\draw[-, color=black] (2.35,0.1) to (2.35,-0.1);

\node[inner sep=0pt,align=center, scale=0.8] (time) at (4.3,-0.3)
{
$\tilde{\alpha}_{1}$
};
\node[inner sep=0pt,align=center, scale=0.8] (time) at (3.48,-0.3)
{
$\tilde{\alpha}_{2}$
};

\node[inner sep=0pt,align=center, scale=0.8] (time) at (2.35,-0.3)
{
$\tilde{\alpha}_{L}$
};

\node[inner sep=0pt,align=center, scale=0.8] (time) at (2.8,-0.3)
{
$\hdots$
};
\end{tikzpicture}
};

\node[scale=1] (system) at (0,-2.25)
{
\begin{tikzpicture}
\draw[->, color=black] (0.0,0) to (6,0);

\node[inner sep=0pt,align=center, scale=0.8] (time) at (6.3,0)
{
  $b(1)$
};

\node[inner sep=0pt,align=center, scale=0.8] (time) at (0.05,-0.3)
{
$0$
};

\node[inner sep=0pt,align=center, scale=0.8] (time) at (5.75,-0.3)
{
$1$
};

\draw[-, color=black] (5.7,0.1) to (5.7,-0.1);

\draw[-, color=black] (0,0.1) to (0,-0.1);

\draw [decorate,decoration={brace,amplitude=5pt,mirror,raise=4pt},yshift=0pt,rotate=180, line width=0.20mm]
(-5.65,0.1) -- (-4.35,0.1) node [black,midway,xshift=0.1cm] {};

\draw [decorate,decoration={brace,amplitude=5pt,mirror,raise=4pt},yshift=0pt,rotate=180, line width=0.20mm]
(-5.65,-0.5) -- (-3.2,-0.5) node [black,midway,xshift=0.1cm] {};

\node[inner sep=0pt,align=center, scale=0.8] (time) at (5,0.45)
{
$\mathscr{S}^{(2)}_{1,1,\pi_{1,l}}$
};
\node[inner sep=0pt,align=center, scale=0.8] (time) at (4.4,1.05)
{
$\mathscr{S}^{(2)}_{1,L,\pi_{1,l}}$
};


%

\draw[-, color=black] (4.3,0.1) to (4.3,-0.1);
\draw[-, color=black] (3.25,0.1) to (3.25,-0.1);


\node[inner sep=0pt,align=center, scale=0.8] (time) at (4.3,-0.3)
{
$\tilde{\beta}_{1,1}$
};
\node[inner sep=0pt,align=center, scale=0.8] (time) at (3.28,-0.3)
{
$\tilde{\beta}_{1,L}$
};


\draw[-, color=black] (1.15,0.1) to (1.15,-0.1);

\node[inner sep=0pt,align=center, scale=0.8] (time) at (1.1,-0.3)
{
$\tilde{\beta}_{0,1}$
};

\node[inner sep=0pt,align=center, scale=0.8] (time) at (1.725,-0.3)
{
$\hdots$
};
\draw[-, color=black] (2.35,0.1) to (2.35,-0.1);
\node[inner sep=0pt,align=center, scale=0.8] (time) at (2.3,-0.3)
{
$\tilde{\beta}_{0,L}$
};

\node[inner sep=0pt,align=center, scale=0.8] (time) at (3.8,-0.3)
{
$\hdots$
};

\draw [decorate,decoration={brace,amplitude=5pt,mirror,raise=4pt},yshift=0pt,rotate=180, line width=0.20mm]
(-2.3,-0.5) -- (-0.05,-0.5) node [black,midway,xshift=0.1cm] {};

\draw [decorate,decoration={brace,amplitude=5pt,mirror,raise=4pt},yshift=0pt,rotate=180, line width=0.20mm]
(-1.1,0.1) -- (-0.05,0.1) node [black,midway,xshift=0.1cm] {};

\node[inner sep=0pt,align=center, scale=0.8] (time) at (0.55,0.45)
{
$\mathscr{S}^{(2)}_{0,1,\pi_{1,l}}$
};
\node[inner sep=0pt,align=center, scale=0.8] (time) at (1.1,1.05)
{
$\mathscr{S}^{(2)}_{0,L,\pi_{1,l}}$
};
\end{tikzpicture}
};

\end{tikzpicture}
  }
  \caption{Illustration of Theorem \ref{thm:best_responses}; the upper plot shows the existence of $L$ thresholds $\tilde{\alpha}_{1} \geq \tilde{\alpha}_{2}, \hdots, \geq \tilde{\alpha}_{L} \in [0,1]$ that define a best response strategy $\tilde{\pi}_{1,l,\tilde{\theta}^{(1)}} \in B_1(\pi_{2,l})$ (Eq. \ref{eq:prop_br_defender}); the lower plot shows the existence of $2L$ thresholds $\tilde{\beta}_{0,1}, \tilde{\beta}_{1,1}, \hdots, \tilde{\beta}_{0,L}, \tilde{\beta}_{1,L} \in [0,1]$ that define a best response strategy $\tilde{\pi}_{2,l,\tilde{\theta}^{(2)}} \in B_2(\pi_{1,l})$ (Eqs. \ref{eq:prop_br_attacker_1}-\ref{eq:prop_br_attacker_2}).}
    \label{fig:threshold_policy_3}
  \end{figure}
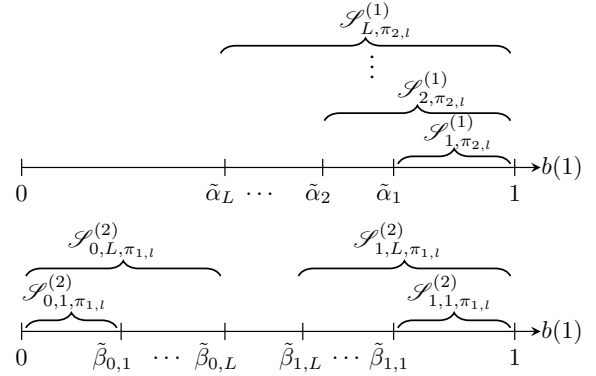

Theorem \ref{thm:best_responses} tells us that $\Gamma$ has a mixed Nash equilibrium. It also tells us that, under certain assumptions, the best response strategies have threshold properties (see Fig. \ref{fig:threshold_policy_3}). In the following, we describe an efficient algorithm that takes advantage of these properties to approximate Nash equilibria of $\Gamma$.

\subsection{Finding Nash Equilibria through Fictitious Self-Play}\label{sec:rl_approach}
Computing Nash equilibria for a POSG is generally intractable \cite{horak_solving_one_sided_posgs}. However, approximate solutions can be obtained through iterative approximation methods. One such method is \textit{fictitious self-play}, where both players start from random strategies and continuously update their strategies based on the outcomes of played game episodes \cite{brown_fictious_play}.

Fictitious self-play evolves through a sequence of iteration steps, which is illustrated in Fig. \ref{fig:fp_2}. An iteration step includes three procedures. First, player $1$ learns a best response strategy against player $2$'s current strategy. The roles are then reversed and player $2$ learns a best response strategy against player $1$'s current strategy. Lastly, the iteration step is completed by having each player adopt a new strategy, which is determined by the empirical distribution over its past best response strategies. The sequence of iteration steps continues until the strategies of both players have sufficiently converged to a Nash equilibrium \cite{brown_fictious_play,multiagent_systems_book_1}.
\subsection{Our Self-Play Algorithm: \textsc{T-FP}}\label{sec:t_fp}
We present a fictitious self-play algorithm, which we call \textsc{T-FP}, that exploits the statements in Theorem \ref{thm:best_responses} to efficiently approximate Nash equilibria of $\Gamma$.

\begin{figure}
  \centering
  \scalebox{1.25}{
    \input{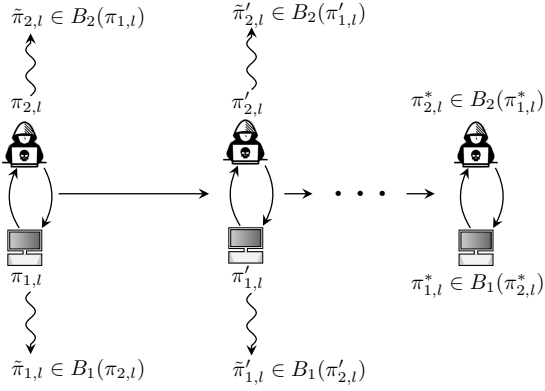}
  }
  \caption{The fictitious self-play process; in every iteration each player learns a best response strategy $\tilde{\pi}_{i,l} \in B_i(\pi_{{-i},l})$ and updates its strategy based on the empirical distribution of its past best responses; the horizontal arrows indicate the iterations of self-play and the vertical arrows indicate the learning of best responses; if the process is convergent, it reaches a Nash equilibrium $(\pi_{1,l}^{*},\pi_{2,l}^{*})$.} \label{fig:fp_2}
\end{figure}
\textsc{T-FP} implements the fictitious self-play process described in Section \ref{sec:rl_approach} and generates a sequence of strategy profiles $(\pi_{1,l}, \pi_{2,l})$, $(\pi^{\prime}_{1,l}$, $\pi^{\prime}_{2,l})$, $\hdots$, $(\pi^{*}_{1,l}, \pi^{*}_{2,l})$ that converges to a Nash equilibrium. During each step of this process, \textsc{T-FP} learns best responses against the players' current strategies and then updates the strategies of both players to be the empirical distribution over the past strategies (see Fig. \ref{fig:fp_2}).

\textsc{T-FP} parameterizes the best response strategies $\tilde{\pi}_{1,l, \tilde{\theta}^{(1)}}\in B_1(\pi_{2,l})$ and $\tilde{\pi}_{2,l,\tilde{\theta}^{(2)}} \in B_2(\pi_{1,l})$ by threshold vectors. The defender's best response strategy is parameterized with the vector $\tilde{\theta}^{(1)} \in \mathbb{R}^{L}$ (Eq. \ref{eq:smooth_threshold}). Similarly, the attacker's best response strategy is parameterized with the vector $\tilde{\theta}^{(2)} \in \mathbb{R}^{2L}$ (Eq. \ref{eq:smooth_threshold_2}).
\begin{align}
&\varphi(a,b) = \left(1 + \left(\frac{b(1-\sigma(a))}{\sigma(a)(1-b)}\right)^{-20}\right)^{-1} \label{eq:differentiable_smooth}
\end{align}
\begin{align}
&\tilde{\pi}_{1,l,\tilde{\theta}^{(1)}}\big(S|b(1)\big) = \varphi\left(\tilde{\theta}^{(1)}_l, b(1)\right) \label{eq:smooth_threshold}
\end{align}
\begin{align}
&\tilde{\pi}_{2,l,\tilde{\theta}^{(2)}}\big(S|b(1),s\big) = \varphi\left(\tilde{\theta}^{(2)}_{sL+l}, \pi_{1,l}(S|b(1))\right)\label{eq:smooth_threshold_2}
\end{align}
$\sigma(\cdot)$ is the sigmoid function, $\sigma(\tilde{\theta}^{(1)}_{1})$, $\sigma(\tilde{\theta}^{(1)}_{2})$, $\hdots$, $\sigma(\tilde{\theta}^{(1)}_{L}) \in [0,1]$ are the $L$ thresholds of the defender (see Theorem \ref{thm:best_responses}.B), and $\sigma(\tilde{\theta}^{(2)}_{1})$, $\sigma(\tilde{\theta}^{(2)}_{2})$, $\hdots$, $\sigma(\tilde{\theta}^{(2)}_{2L}) \in [0,1]$ are the $2L$ thresholds of the attacker (see Theorem \ref{thm:best_responses}.C).

Using this parameterization, \textsc{T-FP} learns best response strategies by iteratively updating $\tilde{\theta}^{(1)}$ and $\tilde{\theta}^{(2)}$ through stochastic approximation. To update the threshold vectors, \textsc{T-FP} simulates $\Gamma$, which allows to evaluate the objective functions $J_1(\tilde{\pi}_{1,l,\tilde{\theta}^{(1)}}, \pi_{2,l})$ (Eq. \ref{eq:objective_1}) and $J_2(\pi_{1,l}, \tilde{\pi}_{2,l,\tilde{\theta}^{(2)}})$ (Eq. \ref{eq:objective_2}). The obtained values of $J_1$ and $J_2$ are then used to estimate the gradients $\nabla_{\tilde{\theta}^{(1)}}J_1$ and $\nabla_{\tilde{\theta}^{(2)}}J_2$ using the Simultaneous Perturbation Stochastic Approximation (SPSA) gradient estimator \cite{spsa, spsa_impl}. Next, the estimated gradients are used to update $\tilde{\theta}^{(1)}$ and $\tilde{\theta}^{(2)}$ through stochastic gradient ascent. This procedure of estimating gradients and updating $\tilde{\theta}^{(1)}$ and $\tilde{\theta}^{(2)}$ continues until $\tilde{\pi}_{1,l,\tilde{\theta}^{(1)}}$ and $\tilde{\pi}_{2,l,\tilde{\theta}^{(2)}}$ have sufficiently converged.

After the best response strategies $\tilde{\pi}_{1,l,\tilde{\theta}^{(1)}}$ and $\tilde{\pi}_{2,l,\tilde{\theta}^{(2)}}$ have converged, the threshold vectors are added to buffers $\Theta^{(1)}$ and $\Theta^{(2)}$, which contain the vectors learned in previous iterations of \textsc{T-FP}. The iteration step is completed by having both players update their strategies based on the empirical distributions over the past vectors in the buffer.


The pseudocode of \textsc{T-FP} is listed in Algorithm \ref{alg:ne_approximation}. (Here $\mathcal{U}_{k}(\{-1,1\})$ denotes a $k$-dimensional discrete multivariate uniform distribution on $\{-1,1\}$.)

\begin{algorithm}
  \caption{\textsc{T-FP}}\label{alg:ne_approximation}
  \hspace*{\algorithmicindent} \textbf{Input} \\
  \hspace*{\algorithmicindent}  $\Gamma, N$: the POSG and $\#$ best response iterations\\
  \hspace*{\algorithmicindent}  $a,c,\lambda,A,\epsilon,\delta$: scalar coefficients\\
  \hspace*{\algorithmicindent} \textbf{Output} \\
  \hspace*{\algorithmicindent}  $(\pi^{*}_{1,l}, \pi^{*}_{2,l})$: an approximate Nash equilibrium
\begin{algorithmic}[1]
  \Procedure{T-FP}{}
  \State $\tilde{\theta}^{(1)} \sim \mathcal{U}_L(\{-1,1\})$, $\quad \tilde{\theta}^{(2)} \sim \mathcal{U}_{2L}(\{-1,1\})$
  \State $\Theta^{(1)} \leftarrow \{\tilde{\theta}^{(1)}\}, \quad \Theta^{(2)} \leftarrow \{\tilde{\theta}^{(2)}\}, \quad \hat{\delta} \leftarrow \infty$
  \State $\pi_{1,l}\leftarrow \text{\textsc{EmpiricalDistribution}($\Theta^{(1)}$)}$
  \State $\pi_{2,l}\leftarrow \text{\textsc{EmpiricalDistribution}($\Theta^{(2)}$)}$
  \While{$\hat{\delta} \geq \delta$}
  \For{$i \in \{1,2\}$}
  \State $\tilde{\theta}_{(1)}^{(i)} \sim \mathcal{U}_{iL}(\{-1,1\})$
  \For{$n \in \{1, \hdots, N\}$}
  \State $a_n \leftarrow \frac{a}{(n + A)^{\epsilon}}, \quad c_n \leftarrow \frac{c}{n^{\lambda}}$
  \For{$k \in \{1, \hdots, iL$}
  \State $(\Delta_n)_k \sim \mathcal{U}_1(\{-1,1\})$
  \EndFor
  \State $R_{high} \sim J_i(\pi_{i,l,\tilde{\theta}^{(i)}_{(n)}} + c_n\Delta_n, \pi_{-i,l})$
  \State $R_{low} \sim J_i(\pi_{i,l,\tilde{\theta}^{(i)}_{(n)}} - c_n\Delta_n, \pi_{-i,l})$
  \For{$k \in \{1, \hdots, iL\}$}
  \State $G \leftarrow \frac{R_{high} - R_{low}}{2c_n(\Delta_n)_{k}}$
  \State $\left(\hat{\nabla}_{\tilde{\theta}^{(i)}_{(n)}}J_i(\pi_{i,l,\tilde{\theta}^{(i)}_{(n)}}, \pi_{-i,l})\right)_{k} \leftarrow G$
  \EndFor
  \State $\tilde{\theta}^{(i)}_{(n+1)} = \tilde{\theta}^{(i)}_{(n)} + a_n\hat{\nabla}_{\tilde{\theta}^{(i)}_{(n)}}J_i(\pi_{i,l,\tilde{\theta}^{(i)}_{(n)}},\pi_{-i,l})$
  \EndFor
  \State $\Theta^{(i)} \leftarrow \Theta^{(i)} \cup \tilde{\theta}^{(i)}_{(N+1)}$
  \EndFor
  \State $\pi_{1,l}\leftarrow \text{\textsc{EmpiricalDistribution}($\Theta^{(1)}$)}$
  \State $\pi_{2,l}\leftarrow \text{\textsc{EmpiricalDistribution}($\Theta^{(2)}$)}$
  \State $\hat{\delta} = \text{\textsc{Exploitability}}(\pi_{1,l}, \pi_{2,l})$
  \EndWhile
  \State \Return $(\pi_{1,l}, \pi_{2,l})$
\EndProcedure
\end{algorithmic}
\end{algorithm}
\section{Emulating the Target Infrastructure to Instantiate the Simulation}\label{sec:policy_learning_results}
To simulate a game episode we must know the observation distribution conditioned on the system state (see Eqs. \ref{eq:obs_1}-\ref{eq:obs_3}). We estimate this distribution using measurements from the emulation system shown in Fig. \ref{fig:method}. Moreover, to evaluate the performance of strategies learned in the simulation system, we run game episodes in the emulation system by having the attacker and the defender take actions at the times prescribed by the learned stopping strategies.
\subsection{Emulating the Target Infrastructure}\label{sec:emu_target_inf}
The emulation system executes on a cluster of machines that runs a virtualization layer provided by Docker \cite{docker} containers and virtual links. The system implements network isolation and traffic shaping on the containers using network namespaces and the NetEm module in the Linux kernel \cite{netem}. Resource constraints on the containers, e.g. CPU and memory constraints, are enforced using cgroups.

The network topology of the emulated infrastructure is given in Fig. \ref{fig:system2} and the configuration is given in Appendix \ref{appendix:infrastructure_configuration}. The system emulates the clients, the attacker, the defender, network connectivity, and $31$ physical components of the target infrastructure (e.g application servers and the gateway). The software functions replicate important components of the target infrastructure, such as, web servers, databases, and the Snort IPS, which is deployed using Snort's community ruleset v2.9.17.1.

We emulate connections between servers as full-duplex loss less connections with capacity $1$ Gbit/s in both directions. We emulate external connections between the gateway and the client population as full-duplex connections of $100$ Mbit/s capacity and $0.1\%$ packet loss with random bursts of $1\%$ packet loss. (These numbers are drawn from empirical studies on enterprise and wide area networks \cite{packet_losses_decreasing,Paxson97end-to-endinternet,elliott_markov_chain_ref}.)

\subsection{Emulating the Client Population}\label{sec:emu_target_inf}
The \textit{client population} is emulated by processes that run inside Docker containers and interact with the application servers through the gateway. The clients select functions uniformly at random from the list given in Table \ref{tab:client_profiles}. We emulate client arrivals using a stationary Poisson process with parameter $\lambda=20$ and exponentially distributed service times with parameter $\mu=\frac{1}{4}$. The duration of a time-step in the emulation is $30$ seconds.
\begin{table}
\centering
\begin{tabular}{ll} \toprule
  {\textit{Functions}} & {\textit{Application servers}} \\ \midrule
  HTTP, SSH, SNMP, ICMP & $N_2,N_3,N_{10},N_{12}$\\
  IRC, PostgreSQL, SNMP & $N_{31},N_{13},N_{14},N_{15},N_{16}$\\
  FTP, DNS, Telnet & $N_{10}, N_{22}, N_{4}$ \\
  \bottomrule\\
\end{tabular}
\caption{Emulated client population; each client interacts with application servers using a set of network functions.}\label{tab:client_profiles}
\end{table}
\subsection{Emulating Defender and Attacker Actions}\label{sec:emu_player_actions}
The attacker and the defender observe the infrastructure continuously and take actions at discrete time-steps $t=1,2,\hdots, T$. During each step, the defender and the attacker can perform one action each.

The defender executes either a continue action or a stop action. Only the stop action affects the progression of the emulation. We have implemented $L=7$ stop actions, which are listed in Table \ref{tab:defender_stop_actions}. The first stop action revokes user certificates and recovers user accounts thought to be compromised by the attacker. The second stop action updates the firewall configuration of the gateway to drop traffic from IP addresses that have been flagged by the IPS. Stop actions $3-6$ update the configuration of the IPS to drop traffic that generates alerts of priorities $1-4$. The final stop action blocks all incoming traffic. (Contrary to Snort's terminology, we define $4$ to be the highest priority.)

\begin{table}
  \centering
\resizebox{1\columnwidth}{!}{%
\begin{tabular}{ll} \toprule
  {\textit{Stop index}} & {\textit{Action}} \\ \midrule
  $1$ & Revoke user certificates \\
  $2$ & Blacklist IPs \\
  $3$ & Drop traffic that generates IPS alerts of priority $1$ \\
  $4$ & Drop traffic that generates IPS alerts of priority $2$ \\
  $5$ & Drop traffic that generates IPS alerts of priority $3$ \\
  $6$ & Drop traffic that generates IPS alerts of priority $4$ \\
  $7$ & Block gateway \\
  \bottomrule\\
\end{tabular}
}
\caption{Defender stop actions in the emulation.}\label{tab:defender_stop_actions}
\end{table}
\begin{table}
\centering
\begin{tabular}{ll} \toprule
  {\textit{Type}} & {\textit{Actions}} \\ \midrule
  Reconnaissance  & TCP-SYN scan, UDP port scan, \\
                  & TCP Null scan, TCP Xmas scan, TCP FIN scan, \\
                  & ping-scan, TCP connection scan, \\
                  & ``Vulscan'' vulnerability scanner \\
  &\\
  Brute-force attack & Telnet, SSH, FTP, Cassandra,\\
                  &  IRC, MongoDB, MySQL, SMTP, Postgres\\
                  &\\
  Exploit & CVE-2017-7494, CVE-2015-3306,\\
                  & CVE-2010-0426, CVE-2015-5602, \\
                  &  CVE-2014-6271, CVE-2016-10033\\
                  & CVE-2015-1427, SQL Injection\\
  \bottomrule\\
\end{tabular}
\caption{Attacker commands to emulate intrusions.}\label{tab:attacker_actions}
\end{table}

Like the defender, the attacker executes either a stop action or a continue action during each time-step. The attacker can take two stop actions. The first determines when the intrusion starts and the second determines when it ends (see Section \ref{sec:formal_model_2}). A continue action in state $s=0$ has no affect on the emulation, but a continue action in state $s=1$ has. When the attacker takes a stop action in state $s=0$ or a continue action in state $s=1$, an intrusion command is executed. We have implemented $25$ such commands, which are listed in Table \ref{tab:attacker_actions}. During each step of an intrusion, the attacker selects a command uniformly at random from the list in Table \ref{tab:attacker_actions}.
\subsection{Estimating the IPS Alert Distribution}\label{sec:estimating_dist}
At the end of every time-step, the emulation system collects the metric $o_t$, which contains the number of IPS alerts that occurred during the time-step, weighted by priority. For the evaluation reported in this paper we collect measurements from $23000$ time-steps of $30$ seconds each.
\begin{figure}
  \centering
    \scalebox{0.83}{
      \includegraphics{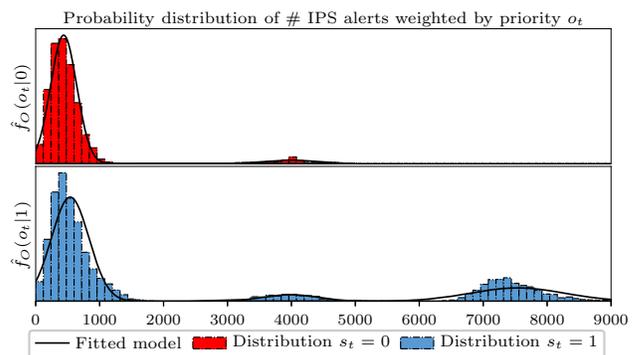}
    }
    \caption{Empirical distributions of $o_t$ when no intrusion occurs ($s_t=0$) and during intrusion ($s_t=1$); the black lines show the fitted Gaussian mixture models.}
    \label{fig:ids_distribution}
\end{figure}

Using these measurements, we fit a Gaussian mixture distribution $\hat{f}_{O}$ as an estimate of $f_{O}$ in the target infrastructure (Eqs. \ref{eq:obs_1}-\ref{eq:obs_2}). For each state $s$, we obtain the conditional distribution $\hat{f}_{O | s}$ through expectation-maximization \cite{em_demp_77}.

Fig. \ref{fig:ids_distribution} shows the empirical distributions and the fitted model over the discrete observation space $\mathcal{O} = \{1,2,\hdots,9000\}$. $\hat{f}_{O | 0}$ and $\hat{f}_{O | 1}$ are Gaussian mixtures with two and three components, respectively. Both mixtures have most probability mass within the range $0-1000$. $\hat{f}_{O | 1}$ also has substantial probability mass at larger values.

The stochastic matrix with the rows $\hat{f}_{O | 0}$ and $\hat{f}_{O | 1}$ has about $ 72 \times 10^{6}$ minors, out of which virtually all are non-negative. This suggests to us that the TP2 assumption in Theorem \ref{thm:best_responses} can be made.
\subsection{Running Game Episodes}\label{sec:simulation_episode}
During a simulation, the game state evolves according to the dynamics described in Section \ref{sec:formal_model_2} and the defender's belief state evolves according to Eq. \ref{eq:belief_upd}. The actions of both players are determined by their strategies, and the observations are sampled from the estimated observation distribution $\hat{f}_O$.

An episode in the emulation system differs from an episode in the simulation system. First, the emulated client population issues requests to the emulated application servers (see Section \ref{sec:emu_target_inf}). Second, the defender's observations are not sampled but are obtained through reading log files and metrics of the emulated infrastructure, which depend on the network traffic generated by the client population, the attacker, as well as internal infrastructure processes. Third, attacker and defender actions in the emulation system include executing networking and computing functions (see Table \ref{tab:defender_stop_actions} and Table \ref{tab:attacker_actions}).

We collect the observations in the emulation system using a distributed log implemented with Kafka \cite{kafka} (see Fig. \ref{fig:management_1}). Log updates are read periodically by a program that computes the defender's belief state $b(1)$ and the game state $s$. Using this information, the attacker's and the defender's strategies determine the next actions, which are executed in the emulation system using an API implemented over gRPC \cite{grpc} and SSH.
\begin{figure}
  \centering
  \scalebox{1}{
    \input{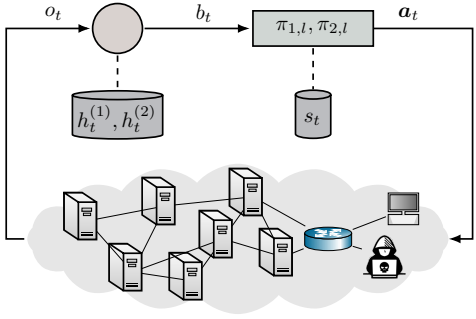}
  }
  \caption{Emulation of a game episode; measurement data ($o_t$) is aggregated in a log that is consumed by a stream processor to compute the next belief $b_t$ based on the history $h_t$; the next pair of actions $\bm{a}_t$ is sampled from the strategy pair $(\pi_{1,l}, \pi_{2,l})$ and is executed in the infrastructure using a gRPC/SSH API.}
    \label{fig:management_1}
\end{figure}
\section{Learning Nash Equilibrium Strategies for the Target Infrastructure}\label{sec:eval}
Our approach to finding effective defender strategies includes: (1) extensive simulation of game episodes in the simulation system to learn Nash equilibrium strategies; and (2) evaluation of the learned strategies on the emulation system (see Fig. \ref{fig:method}). This section describes our evaluation results for the intrusion prevention use case.

The environment for running simulations and training strategies is a Tesla P100 GPU. The hyperparameters for the training algorithm are listed in Appendix \ref{appendix:hyperparameters}. The emulated infrastructure is deployed on a server with a 24-core Intel Xeon Gold 2.10 GHz CPU and 768 GB RAM.

The code for the simulation system and the measurement traces for the intrusion prevention use case are available at \cite{github_cnsm_21_hammar_stadler}. They can be used to validate our results and extend this research.
\subsection{Learning Equilibrium Strategies through Self-Play}
\begin{figure*}
\centering
\scalebox{0.94}{
      \includegraphics{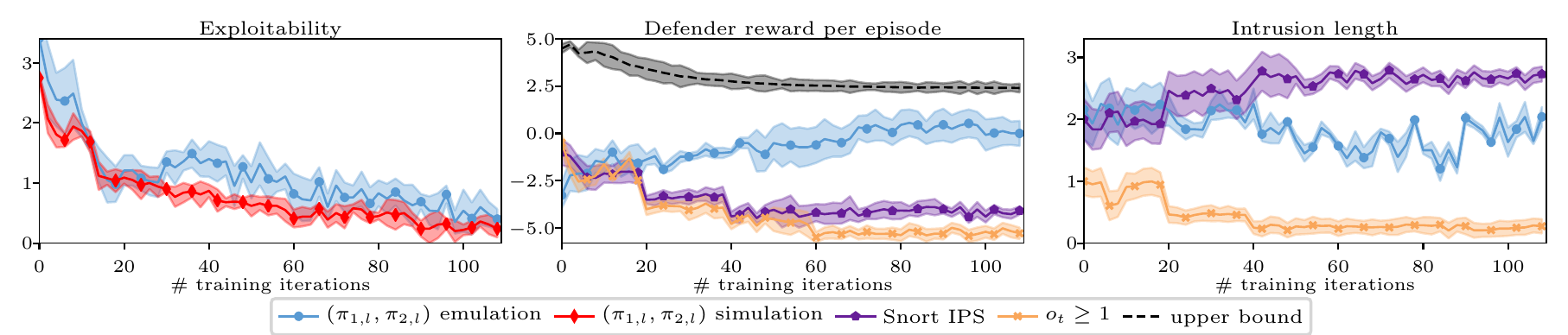}
}
\caption{Learning curves from the self-play process with \textsc{T-FP}; the red curve show simulation results and the blue curves show emulation results; the purple, orange, and black curves relate to baseline strategies; the figures show different performance metrics: exploitability, episodic reward, and the length of intrusion; the curves indicate the mean and the $95\%$ confidence interval over four training runs with different random seeds.}
    \label{fig:exploitability_curve}
  \end{figure*}
We run \textsc{T-FP} for $500$ iterations to estimate a Nash equilibrium using the iterative method described in Section \ref{sec:rl_approach}. At the end of each iteration step, we evaluate the current strategy pair $(\pi_{1,l}, \pi_{2,l})$ by running $500$ evaluation episodes in the simulation system and $5$ evaluation episodes in the emulation system. This allows us to produce learning curves for different performance metrics (see Fig. \ref{fig:exploitability_curve}).

To estimate the convergence of the sequence of strategy pairs to a Nash equilibrium, we use the \textit{approximate exploitability} metric $v^{exp}$ \cite{approx_br}:
\begin{align}
v^{exp} = J_1(\hat{\pi}_{1,l}, \pi_{2,l}) + J_2(\pi_{1,l}, \hat{\pi}_{2,l})
\end{align}
where $\hat{\pi}_{i,l}$ denotes an approximate best response strategy for player $i$ obtained through dynamic programming. The closer $v^{exp}$ becomes to $0$, the closer $(\pi_{1,l},\pi_{2,l})$ is to a Nash equilibrium.

The $500$ training iterations constitute one \textit{training run}. We run four training runs with different random seeds. A single training run takes about $5$ hours of processing time on a P100 GPU. In addition, it takes around $12$ hours to evaluate the strategies on the emulation system.

\textbf{Defender baseline strategies.} We compare the learned defender strategies with three baselines. The first baseline prescribes the stop action whenever an IPS alert occurs, i.e., whenever $o_t \geq 1$. The second baseline follows the Snort IPS's internal recommendation system and takes a stop action whenever $100$ IP packets have been dropped by the Snort IPS (see Appendix \ref{appendix:infrastructure_configuration} for the Snort configuration). The third baseline assumes knowledge of the exact intrusion time and performs all stop actions at subsequent time-steps.

\textbf{Baseline algorithms.} We compare the performance of \textsc{T-FP} with two baseline algorithms: Neural Fictitious Self-Play (NFSP) \cite{heinrich_1} and Heuristic Search Value Iteration (HSVI) for one-sided POSGs \cite{horak_bosansky_hsvi}. NFSP is a state-of-the-art deep reinforcement learning algorithm for imperfect-information games. Similar to \textsc{T-FP}, NFSP is a fictitious self-play algorithm. However, contrary to \textsc{T-FP}, NFSP does not exploit the threshold structures expressed in Theorem \ref{thm:best_responses} and as a result is more complex. HSVI is a state-of-the-art dynamic programming algorithm for one-sided POSGs.
\subsection{Discussion of the Evaluation Results}\label{sec:one_stop_evaluation}
\begin{figure}
\centering
\scalebox{0.86}{
      \includegraphics{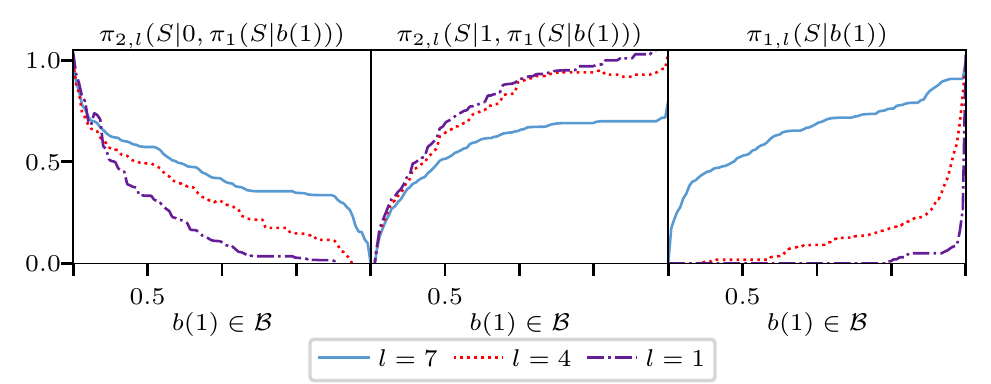}
}
\caption{Probability of the stop action $S$ by the learned equilibrium strategies in function of $b(1)$ and $l$; the left and middle plots show the attacker's stopping probability when $s=0$ and $s=1$, respectively; the right plot shows the defender's stopping probability.}
    \label{fig:stop_prob2}
  \end{figure}

Fig. \ref{fig:exploitability_curve} shows the learning curves of the strategies obtained during the \textsc{T-FP} self-play process. The red curve represents the results from the simulation system and the blue curves show the results from the emulation system. The purple and orange curves give the performance of the Snort IPS baseline and the baseline strategy that mandates a stop action whenever an IPS alert occurs, respectively. The dashed black curve gives the performance of the baseline strategy that assumes knowledge of the exact intrusion time.

The results in Fig. \ref{fig:exploitability_curve} lead us to the following conclusions. First, the fact that all learning curves seem to converge suggests to us that the learned strategies have converged as well. Second, we observe that the exploitability of the learned strategies converges to small values (left plot of Fig. \ref{fig:exploitability_curve}). This indicates that the learned strategies approximate a Nash equilibrium both in the simulation system and in the emulation system. Third, we see from the middle plot in Fig. \ref{fig:exploitability_curve} that both baseline strategies show decreasing performance as the attacker updates its strategy. In contrast, the learned defender strategy improves its performance over time. This shows the benefit of using a game-theoretic approach, whereby the defender's strategy is optimized against a dynamic attacker.

\begin{figure}
\centering
\scalebox{0.82}{
      \includegraphics{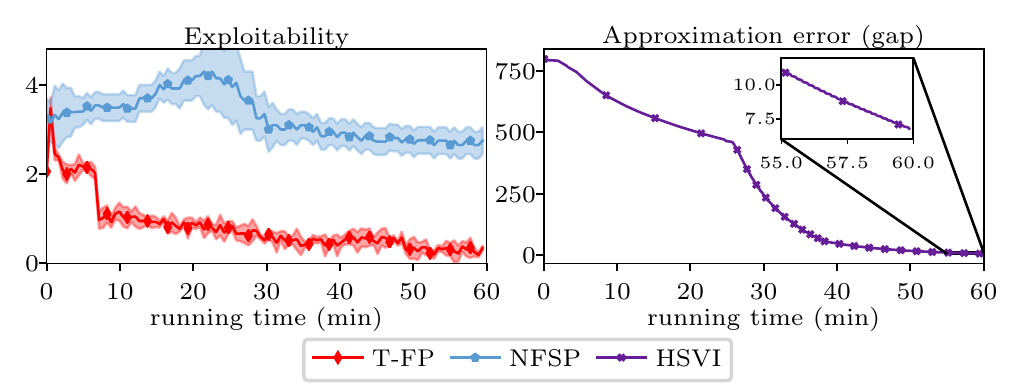}
}
\caption{Comparison between \textsc{T-FP} and two baseline algorithms: NFSP and HSVI; all curves show simulation results; the red curve relate to \textsc{T-FP}; the blue curve to NFSP; the purple curve to HSVI; the left plot shows the approximate exploitability metric and the right plot shows the HSVI approximation error \cite{horak_bosansky_hsvi}; the curves depicting \textsc{T-FP} and NFSP show the mean and the $95\%$ confidence interval over four training runs with different random seeds.}
    \label{fig:converge_times}
  \end{figure}

Fig. \ref{fig:stop_prob2} illustrates some of the structural properties of the learned strategies. The y-axis shows the probability of the stop action $S$ and the x-axis shows the defender's belief $b(1) \in \mathcal{B}$. We observe that the strategies are stochastic. Hence, since Fig. \ref{fig:exploitability_curve} suggests that the learned strategies converge to a Nash equilibrium, Fig. \ref{fig:stop_prob2} suggests that this equilibrium is \textit{mixed}, which we expect based on Theorem \ref{thm:best_responses}.A. As we further expect from Theorem \ref{thm:best_responses}.B-C, we see that that the defender's stopping probability is increasing with $b(1)$ and decreasing with $l$ (right plot of Fig. \ref{fig:stop_prob2}). Similarly, we observe that the attacker's stopping probability is decreasing with the defender's stopping probability when $s=0$ and is increasing when $s=1$ (left and middle plot of Fig. \ref{fig:stop_prob2}).

Fig. \ref{fig:converge_times} allows a comparison between \textsc{T-FP} and the two baseline algorithms (NFSP and HSVI) as they execute in the simulation system. Since \textsc{T-FP} and NFSP both implement fictitious self-play, they allow for a direct comparison. We observe that \textsc{T-FP} converges much faster to a Nash equilibrium than NFSP. We expect the fast convergence of \textsc{T-FP} due to its design to exploit structural properties of the stopping game.

The right plot of Fig. \ref{fig:converge_times} shows that HSVI reaches an HSVI approximation error $<5$ within an hour. We expected slower convergence due to findings in \cite{horak_thesis,horak_solving_one_sided_posgs}. A direct comparison between \textsc{T-FP} and HSVI is not possible due to the different nature of the two algorithms.

Lastly, Fig. \ref{fig:value_fun} shows the computed value function of the game $\hat{V^{*}}$ (Eq. \ref{eq:bellman_posg_1}). We see that $\hat{V^{*}}$ is piece-wise linear and convex, as expected by the theory of one-sided POSGs \cite{horak_thesis}. We also observe that the value of $\hat{V^{*}}$ is minimal when $b(1) \approx 0.25$ and is $0$ when $b(1)=1$. Moreover, we note that this value is slightly lower for $l=1$ compared to $l=7$.
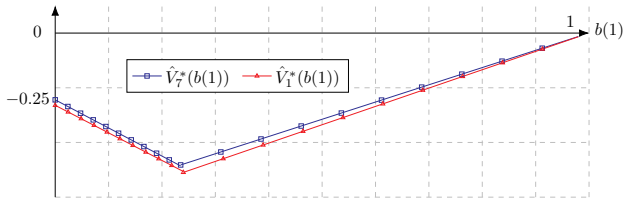
\begin{figure}
  \centering
  \scalebox{0.65}{
    \begin{tikzpicture}[
    dot/.style={
        draw=black,
        fill=blue!90,
        circle,
        minimum size=3pt,
        inner sep=0pt,
        solid,
    },
    ]

\node[scale=1] (kth_cr) at (0,2.15)
{
  \begin{tikzpicture}
    \begin{axis}[
      xmin=0,
      grid=major,
      grid style={dashed},
      xmax=1,
        ymin=-0.6,
        ymax=0.1,
        axis lines=center,
       ticks=none,
        xlabel style={below right},
        ylabel style={above left},
        axis line style={-{Latex[length=2mm]}},
        smooth,
        legend style={at={(0.55,0.72)}},
        legend columns=2,
        legend style={
            /tikz/column 2/.style={
                column sep=5pt,
              }
            },
            width=12.5cm,
            height=5.5cm
        ]


\addplot[Blue,mark=square,mark repeat=10,mark size=1.3pt,samples=100,smooth, name path=l1, domain=0:0.235]   (x,{(1-x)*-0.244144 + x*-1.27207});

\addplot[Red,mark=triangle,mark repeat=10,mark size=1.3pt,samples=100,smooth, name path=l2,  domain=0:0.24]   (x,{(1-x)*-0.264144 + x*-1.27207});

\addplot[Blue,mark=square,mark repeat=10,mark size=1.3pt,samples=100,smooth, name path=l1, domain=0.235:0.98]   (x,{(1-x)*-0.63033 + x*0});
\addplot[Red,mark=triangle,mark repeat=10,mark size=1.3pt,samples=100,smooth, name path=l2,  domain=0.24:0.98]   (x,{(1-x)*-0.67033 + x*0});

\legend{$\hat{V}^{*}_{7}(b(1))$, $\hat{V}^{*}_{1}(b(1))$}
\end{axis}
\node[inner sep=0pt,align=center, scale=1, rotate=0, opacity=1] (obs) at (10.6,3.6)
{
  $1$
};

\node[inner sep=0pt,align=center, scale=1, rotate=0, opacity=1] (obs) at (11.4,3.4)
{
  $b(1)$
};

\node[inner sep=0pt,align=center, scale=1, rotate=0, opacity=1] (obs) at (-0.3,3.4)
{
  $0$
};

\node[inner sep=0pt,align=center, scale=1, rotate=0, opacity=1] (obs) at (-0.5,2)
{
  $-0.25$
};

\end{tikzpicture}
};

  \end{tikzpicture}
 }
    \caption{The value function $\hat{V^{*}}_l(b(1))$ computed through the HSVI algorithm with approximation error $4$; the blue and red curves relate to $l=7$ and $l=1$, respectively.}
    \label{fig:value_fun}
\end{figure}
\section{Related Work}\label{sec:related_work}
Traditional approaches to intrusion prevention use packet inspection and static rules for detection of intrusions and selection of response actions \cite{snort,ids_survey, int_prevention}. Their main drawback lies in the need for domain experts to configure the rule sets. As a consequence, much effort has been devoted to developing methods for finding security strategies in an automatic way. This research uses concepts and methods from various areas, most notably from anomaly detection (see example \cite{anomaly_ddetection}), change-point detection (see example \cite{tartakovsky_1}), statistical learning (see examples \cite{fung_ids_distributed,fung_ids_distributed_dirichlet,ml_anomaly_detection}), control theory (see survey \cite{Miehling_control_theoretic_approaches_summary}), game theory (see textbooks \cite{nework_security_alpcan,tambe,carol_book_intrusion_detection,levente_book}), artificial intelligence (see survey \cite{ai_survey} and textbook \cite{al_shaer_book}), dynamic programming (see example \cite{dp_security_1}), reinforcement learning (see surveys \cite{deep_rl_cyber_sec,control_rl_reviews} and textbook \cite{cybenko_acd}), evolutionary methods (see examples \cite{armsrace_malware,hemberg_oreily_evo}), and attack graphs (see example \cite{miehling_attack_graph}). While the research reported in this paper is informed by all the above works, we limit the following discussion to prior work that uses game-theoretic models and centers around finding strategies through reinforcement learning.

\subsection{Reinforcement Learning in Network Security}
Many recent results of automating security strategies have been obtained using reinforcement learning methods. In particular, a large number of studies have focused on intrusion prevention use cases similar to the one we discuss in this paper \cite{hammar_stadler_cnsm_20,hammar_stadler_cnsm_21, elderman, schwartz_2020, oslo_pentest_rl, kurt_rl, microsoft_red_teaming, ridley_ml_defense, rl_cyberdefense_heartbleed, deep_hierarchical_rl_pentest, pentest_rl_rohit, adaptive_cyber_defense_pomdp_rl, muzero_sdn,atmos,sdn_rl_ddos,deep_air,noms_demo_preprint,game_cyber_rl_sim,al_shaer_book_ppo_simulation,cmu_ppo_selfplay}.

These works use a variety of models, including MDPs \cite{oslo_pentest_rl,ridley_ml_defense,deep_hierarchical_rl_pentest,pentest_rl_rohit,deep_air,al_shaer_book_ppo_simulation}, Markov games \cite{elderman, hammar_stadler_cnsm_20, muzero_sdn,game_cyber_rl_sim}, attack graphs \cite{cmu_ppo_selfplay}, and POMDPs \cite{hammar_stadler_cnsm_21,adaptive_cyber_defense_pomdp_rl,kurt_rl}, as well as various reinforcement learning algorithms, including Q-learning \cite{elderman,oslo_pentest_rl,ridley_ml_defense,mec_game_rl_q_learning_sim}, SARSA \cite{kurt_rl}, PPO \cite{hammar_stadler_cnsm_20,hammar_stadler_cnsm_21,cmu_ppo_selfplay,al_shaer_book_ppo_simulation}, hierarchical reinforcement learning \cite{deep_hierarchical_rl_pentest}, DQN \cite{pentest_rl_rohit}, Thompson sampling \cite{adaptive_cyber_defense_pomdp_rl}, MuZero \cite{muzero_sdn}, NFQ \cite{atmos}, DDQN \cite{deep_air}, NFSP \cite{nfsp_jamming_1_sim}, and DDPG \cite{sdn_rl_ddos,game_cyber_rl_sim}.

This paper differs from the works referenced above in three main ways. First, we model the intrusion prevention use case as a partially observed stochastic game. Most of the other works model the use case as a single-agent MDP or POMDP. The advantage of using a game-theoretic model is that it allows finding defender strategies that are effective against dynamic attackers.

Second, in a novel approach, we derive structural properties of strategies in the game using optimal stopping theory.

Third, our method to find effective defender strategies includes using an emulation system in addition to a simulation system. The advantage of our method compared to the simulation-only approaches \cite{hammar_stadler_cnsm_20,hammar_stadler_cnsm_21, elderman, schwartz_2020, oslo_pentest_rl, kurt_rl, microsoft_red_teaming, ridley_ml_defense, rl_cyberdefense_heartbleed, deep_hierarchical_rl_pentest, pentest_rl_rohit, adaptive_cyber_defense_pomdp_rl,game_cyber_rl_sim,cmu_ppo_selfplay,al_shaer_book_ppo_simulation,mec_game_rl_q_learning_sim,nfsp_jamming_1_sim} is that the parameters of our simulation system are determined by measurements from an emulation system instead of being chosen by a human expert. Further, the learned strategies are evaluated in the emulation system, not in the simulation system. As a consequence, the evaluation results give higher confidence of the obtained strategies' performance in the target infrastructure than what simulation results would provide.

Some prior work on automated intrusion prevention that make use of emulation are: \cite{muzero_sdn}, \cite{atmos}, \cite{sdn_rl_ddos}, \cite{9328143}, and \cite{deep_air}. They emulate software-defined networks based on Mininet \cite{mininet}. The main differences between these efforts and the work described in this paper are: (1) we develop our own emulation system which allows for experiments with a large variety of exploits; (2) we focus on a different intrusion prevention use case; (3) we do not assume that the defender has perfect observability; (4) we do not assume a static attacker; and (5), we use an underlying theoretical framework to formalize the use case, derive structural properties of optimal strategies, and test these properties in an emulation system.

Finally, \cite{cyborg}, \cite{cygil}, and \cite{farland} describe ongoing efforts in building emulation platforms for reinforcement learning and cyber defense, which resemble our emulation system. In contrast to these papers, our emulation system has been built to investigate the specific use case of intrusion prevention and forms an integral part of our general solution method (see Fig. \ref{fig:method}).
\subsection{Game Theoretic Modeling in Network Security}
Several examples of game theoretic security models can be found in the literature, e.g. advanced persistent threat games \cite{flipit,dynamic_game_linan_zhu,general_sum_markov_games_for_strategic_detection_of_apt}, honeypot placement games \cite{honeypot_game,DBLP:journals/compsec/HorakBTKK19,game_theoretic_modeling_ofhoneypot_selection}, resource allocation games \cite{game_resource_alloc_malicious_packet}, authentication games \cite{serkan_gyorgy_game}, distributed denial-of-service games \cite{9328143,posg_cyber_deception_network_epidemic}, and intrusion prevention games \cite{stocahstic_games_security_indep_nodes_nguyen_alpcan_basar, hammar_stadler_cnsm_20, muzero_sdn,optimal_thresholds_for_ids,a_game_theoretic_ids_control_sys_alpcan_basar,zhu_basar_dynamic_policy_ids_config}.

This paper differs from the works referenced above in two main ways. First, we model the intrusion prevention use case as an optimal stopping game. The benefit of our model is that it provides insight into the structure of best response strategies through the theory of optimal stopping.

Game-theoretic formulations based on optimal stopping theory can be found in prior research on Dynkin games \cite{dynkin_orig_3,dynkin_orig_2,dynkin_example_1,dynkin_example_2,dynkin_example_3}. Compared to these papers, our approach is more general by (1) allowing each player to take multiple stop actions within an episode; and (2), by not assuming a game of perfect information. Another difference is that the referenced papers either study purely mathematical problems or problems in mathematical finance. To the best of our knowledge, we are the first to apply the stopping game formulation to the use case of intrusion prevention.

Our stopping game has similarities with the FlipIt game \cite{flipit} and signaling games \cite{signaling_game_original}, both of which are commonplace in the security literature (see survey \cite{game_t_sec_survey} and textbooks \cite{nework_security_alpcan,tambe,carol_book_intrusion_detection,levente_book}). Signaling games have the same information asymmetry as our game and FlipIt uses the same binary state space to model the state of an intrusion. The main differences are as follows. FlipIt models the use case of advanced persistent threats and is a symmetric non-zero-sum game. In contrast, our game models an intrusion prevention use case and is an asymmetric zero-sum game. Compared to signaling games, the main differences are that our game is a sequential and simultaneous-move game. Signaling games are typically two-stage games where one player moves in each stage.

Second, as we noted above, we evaluate obtained strategies on an emulated IT infrastructure. This contrasts with most of the prior works that use game-theoretic approaches, which only evaluate strategies analytically or in simulation \cite{flipit,dynamic_game_linan_zhu,general_sum_markov_games_for_strategic_detection_of_apt,honeypot_game,DBLP:journals/compsec/HorakBTKK19,game_theoretic_modeling_ofhoneypot_selection,serkan_gyorgy_game,posg_cyber_deception_network_epidemic,stocahstic_games_security_indep_nodes_nguyen_alpcan_basar, hammar_stadler_cnsm_20,optimal_thresholds_for_ids,a_game_theoretic_ids_control_sys_alpcan_basar,zhu_basar_dynamic_policy_ids_config}.


\section{Conclusion and Future Work}\label{sec:conclusions}
We formulate the interaction between an attacker and a defender in an intrusion prevention use case as an optimal stopping game. The theory of optimal stopping provides us with insight about optimal strategies for attackers and defenders. Based on this knowledge, we develop a fictitious self-play algorithm, \textsc{T-FP}, which allows us to compute near optimal strategies in an efficient way. This approach provides us with a complete formal framework for analyzing and solving the intrusion prevention use case. The simulation results from executions of \textsc{T-FP} show that the exploitability of the computed strategies converges, which suggests that the strategies converge to a Nash equilibrium and thus to an optimum in the game-theoretic sense. The results also demonstrate that \textsc{T-FP} converges faster than a state-of-the-art fictitious self-play algorithm by taking advantage of structural properties of optimal stopping strategies.

To assess the computed strategies in a real environment, we evaluate them in a system that emulates our target infrastructure. The results show that the strategies achieve almost the same performance in the emulated infrastructure as in the simulation. This gives us a high confidence of the obtained strategies' performance in the target infrastructure.

We plan to extend this work in several directions. First of all, the model of the attacker and the defender in this paper is simplistic as it only models the timing of actions and not their selection. We plan to combine our current model for deciding when to take defensive actions with a model for the selection of which action to take.
\section{Acknowledgments}
This research has been supported in part by the Swedish armed forces and was conducted at KTH Center for Cyber Defense and Information Security (CDIS). The authors would like to thank Pontus Johnson for his useful input to this research, and Forough Shahab Samani and Xiaoxuan Wang for their constructive comments on a draft of this paper. The authors are also grateful to Branislav Bosansk{\'{y}} for sharing the code of the HSVI algorithm for one-sided POSGs and to Jakob Stymne for contributing to our implementation of NFSP.
\appendices
\section{Proof of Theorem \ref{thm:best_responses}.A}\label{appendix:theorem_1_a}
\begin{proof}[Proof of Theorem \ref{thm:best_responses}.A.]
Since the POSG $\Gamma$ introduced in Section \ref{sec:formal_model_2} is finite and $\gamma \in (0,1)$, the existence proofs in \cite{posg_equilibria_existence_finite_horizon} and \cite{horak_thesis} applies, which state that a mixed Nash equilibrium exists.

We prove that a pure Nash equilibrium exists when $s=0 \iff b(1) = 0$ using a proof by construction. It follows from Eqs. \ref{eq:reward_0}-\ref{eq:reward_5} and Eq. \ref{eq:br_defender} that the pure strategy defined by $\bar{\pi}_{1,l}(0)=C$ and $\bar{\pi}_{1,l}(b(1))=S \iff b(1)>0$ is a best response for the defender against any attacker strategy when $s=0 \iff b(1) = 0$. Similarly, given $\bar{\pi}_{1,l}$, we get from Eqs. \ref{eq:reward_0}-\ref{eq:reward_5} and Eq. \ref{eq:br_attacker} that the pure strategy defined by $\bar{\pi}_{2,l}(0,b(1))=C$ and $\bar{\pi}_{2,l}(1,b(1))=S$ for all $b(1) \in [0,1]$ is a best response for the attacker. Hence, $(\bar{\pi}_{1,l},\bar{\pi}_{2,l})$ is a pure Nash equilibrium (see Eq. \ref{eq:minmax_objective}).
\end{proof}
\section{Proof of Theorem \ref{thm:best_responses}.B.}\label{appendix:theorem_1_b}
\begin{proof}[]
Given the POSG $\Gamma$ introduced in Section \ref{sec:formal_model_2} and a fixed attacker strategy $\pi_{2,l}$, the best response strategy of the defender $\tilde{\pi}_{1,l}\in B_1(\pi_{2,l})$ is an optimal strategy in a POMDP $\mathcal{M}^{P}$ (see Section \ref{sec:game_analysis}). Hence, it is sufficient to show that there exists an optimal strategy $\pi_{1,l}^{*}$ in $\mathcal{M}^{P}$ that satisfies Eq. \ref{eq:prop_br_defender}. The conditions for Eq. \ref{eq:prop_br_defender} to hold and the proof are given in our previous work \cite{hammar_stadler_tnsm}[Theorem 1.C]. Since $f_{O|s}$ is TP2 by assumption and all of the remaining conditions hold by definition of $\Gamma$, the result follows.
\end{proof}
\section{Proof of Theorem \ref{thm:best_responses}.C.}\label{appendix:theorem_1_c}
Given the POSG $\Gamma$ introduced in Section \ref{sec:formal_model_2} and a fixed defender strategy $\pi_{1,l}$, the best response strategy of the attacker $\tilde{\pi}_{2,l} \in B_2(\pi_{1,l})$ is an optimal strategy in an MDP $\mathcal{M}$ (see Section \ref{sec:game_analysis}). Hence, it is sufficient to show that there exists an optimal strategy $\pi_{2,l}^{*}$ in $\mathcal{M}$ that satisfies Eqs. \ref{eq:prop_br_attacker_1}-\ref{eq:prop_br_attacker_1}. To prove this, we use properties of $\mathcal{M}$'s value function $V_{\pi_{1,l},l}^{*}$.

We use the value iteration algorithm to establish properties of $V_{\pi_{1,l},l}^{*}$ \cite{puterman,krishnamurthy_2016}. Let $V_{\pi_{1,l},l}^{k}$, $\mathscr{S}^{k,(2)}_{s,l,\pi_{1,l}}$, and $\mathscr{C}^{k,(2)}_{s,l,\pi_{1,l}}$, denote the value function, the stopping set, and the continuation set at iteration $k$ of the value iteration algorithm, respectively. Then, $\lim_{k\rightarrow \infty}V_{\pi_{1,l},l}^k=V_{\pi_{1,l},l}^{*}$, $\lim_{k\rightarrow \infty}\mathscr{S}^{k,(2)}_{s,l,\pi_{1,l}}=\mathscr{S}^{(2)}_{s,l,\pi_{1,l}}$, and $\lim_{k\rightarrow \infty}\mathscr{C}^{k,(2)}_{s,l,\pi_{1,l}}$ $=\mathscr{C}^{(2)}_{s,l,\pi_{1,l}}$ \cite{puterman,krishnamurthy_2016}. We define $V_{\pi_{1,l},l}^0\big($$(s,b(1))$$\big)=0$ for all $b(1)\in [0,1]$, $s \in \mathcal{S}$ and $l\in \{1,\hdots,L\}$.

Towards the proof of Theorem \ref{thm:best_responses}.C, we state the following six lemmas.
\begin{lemma}\label{lemma:V_geq_zero}
Given any defender strategy $\pi_{1,l}$, $V^{*}_{\pi_{1,l,2}}\big($$s,b(1)\big)$ $\geq 0$ for all $s \in \mathcal{S}$ and $b(1) \in [0,1]$.
\end{lemma}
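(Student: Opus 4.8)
The plan is to prove Lemma~\ref{lemma:V_geq_zero} by induction on the value-iteration iterates $V^{k}_{\pi_{1,l},l}$ of the attacker's MDP $\mathcal{M}$, using the convergence $V^{k}_{\pi_{1,l},l}\to V^{*}_{\pi_{1,l},l}$ and the initialization $V^{0}_{\pi_{1,l},l}(s,b(1))=0$ stated just above the lemma. Here $V^{*}_{\pi_{1,l},l}$ is the optimal value of $\mathcal{M}$, whose per-step reward is the attacker's payoff $-r_t$ (the negative of the defender's reward, Eqs.~\ref{eq:reward_0}--\ref{eq:reward_5}). So it suffices to show $V^{k}_{\pi_{1,l},l}(s,b(1))\ge 0$ for every $k\ge 0$, $s\in\mathcal{S}$ and $b(1)\in[0,1]$; the base case $k=0$ is immediate, and passing to the limit gives the claim.

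For the inductive step I would argue that from every state the attacker has an action whose one-step payoff is non-negative and whose successor states all carry non-negative value under the inductive hypothesis, so that action's value is $\ge 0$ and hence so is the maximum over actions, i.e. $V^{k+1}_{\pi_{1,l},l}$. The three cases: (i) in $s=\emptyset$ every action gives payoff $0$ and keeps the game in $\emptyset$ (Eqs.~\ref{eq:reward_0},~\ref{eq:tp_7}), so $V^{k+1}_{\pi_{1,l},l}(\emptyset,\cdot)=0$; (ii) in $s=1$ the stop action yields attacker payoff $-\mathcal{R}_{l_t}(1,(\cdot,S))=0$ (Eq.~\ref{eq:reward_0}) and sends the game to $\emptyset$ with probability one (Eq.~\ref{eq:tp_7}), whose value is $0$; (iii) in $s=0$ the continue action yields attacker payoff $-\mathcal{R}_{l_t}(0,(a^{(1)}_t,C))$, which is $0$ when the defender plays $C$ (Eq.~\ref{eq:reward_2}) and $-R_{cost}/l_t\ge 0$ when the defender plays $S$ (Eq.~\ref{eq:reward_3}, since $R_{cost}<0$), hence non-negative in expectation over $a^{(1)}_t\sim\pi_{1,l}(\cdot\mid b(1))$; moreover, reading off Eqs.~\ref{eq:tp_1} and~\ref{eq:tp_7}, the successor of $s=0$ under the continue action is either $(0,b')$ for the updated belief $b'$ or the terminal state $\emptyset$, never $s=1$, so by the inductive hypothesis and case~(i) the continuation value is $\ge 0$. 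Combining, $V^{k+1}_{\pi_{1,l},l}(s,b(1))\ge 0$ in all cases, closing the induction.

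An equivalent argument I would mention as a remark avoids value iteration entirely: the stationary attacker strategy ``play $C$ in state $0$, play $S$ in state $1$'' generates, against any $\pi_{1,l}$ and from any state, a trajectory on which every per-step attacker payoff is $\ge 0$ (the intrusion is never started, and if the game already sits in state $1$ it is terminated at once at payoff $0$), so its value is $\ge 0$ and $V^{*}_{\pi_{1,l},l}$ dominates it. I expect the only delicate point — and the main place such an argument could slip — to be the transition bookkeeping in case~(iii): one must verify from Eqs.~\ref{eq:tp_1}--\ref{eq:tp_7} that continuing in state $0$ can never move the game into state $1$, where the intrusion loss $R_{int}<0$ (Eq.~\ref{eq:reward_5}) would make the attacker payoff negative. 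Beyond that the only care needed is keeping the sign convention straight, so that the non-positivity of $\mathcal{R}_{l_t}(0,(S,\cdot))$ becomes the non-negativity we want, and stating the inductive hypothesis uniformly over $b(1)\in[0,1]$ so that it survives the belief update inside the expectation.
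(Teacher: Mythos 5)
Your proposal is correct, and both of the arguments you describe work; the paper's own proof is precisely the one you relegate to a closing remark. The paper simply fixes the stationary attacker strategy $\bar{\pi}_{2,l}$ with $\bar{\pi}_{2,l}(0,\cdot)=C$ and $\bar{\pi}_{2,l}(1,\cdot)=S$, notes from Eqs.~\ref{eq:reward_0}--\ref{eq:reward_5} that its value $V^{\bar{\pi}_{2,l}}_{\pi_{1,l},l}(s,b(1))$ is nonnegative against every defender strategy, and concludes by optimality that $V^{*}_{\pi_{1,l},l}\geq V^{\bar{\pi}_{2,l}}_{\pi_{1,l},l}\geq 0$. Your primary route instead pushes the same nonnegativity through the value-iteration iterates $V^{k}_{\pi_{1,l},l}$, exhibiting in each state an action with nonnegative immediate attacker payoff and nonnegative successor values (terminal state, $S$ in $s=1$, $C$ in $s=0$), and passing to the limit. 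This is more case analysis, but it buys explicitness: you verify from Eqs.~\ref{eq:tp_1}--\ref{eq:tp_7} that continuing in $s=0$ can never land in $s=1$, and that $S$ in $s=1$ terminates at zero payoff even when it coincides with a defender stop --- facts the paper's one-line ``it follows from Eqs.~\ref{eq:reward_0}--\ref{eq:reward_5}'' leaves implicit --- and it reuses the value-iteration scaffolding the paper sets up anyway for Lemmas~\ref{lemma:decreasing_v}--\ref{lemma:decreasing_differences}. In short, your induction is a fully spelled-out version of the paper's strategy-domination argument, and your remark is the paper's proof verbatim in spirit.
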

\begin{proof}[Proof.]
Consider $\bar{\pi}_{2,l}$ defined by $\bar{\pi}_{2,l}(0,\cdot)=C$ and $\bar{\pi}_{2,l}(1,\cdot)=S$. Then it follows from Eqs. \ref{eq:reward_0}-\ref{eq:reward_5} that for any $\pi_{1,l} \in \Pi_1$, $s \in \mathcal{S}$ and $b(1) \in [0,1]$, the following holds: $V^{\bar{\pi}_{2,l}}_{\pi_{1,l},l}(s,b(1)) \geq 0$. By optimality, $V^{\bar{\pi}_{2,l}}_{\pi_{1,l},l}(s,b(1)) \leq V^{*}_{\pi_{1,l},l}(s,b(1))$. Hence, $V^{*}_{\pi_{1,l},l}(s,b(1)) \geq 0$.
\end{proof}

\begin{lemma}\label{lemma:decreasing_v}
$V^{*}_{\pi_{1,l},l}\big($$1,b(1)\big)$ is non-increasing with $\pi_{1,l}(S|b(1))$ and non-decreasing with $l \in \{1,\hdots,L\}$.
\end{lemma}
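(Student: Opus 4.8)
The plan is to establish Lemma~\ref{lemma:decreasing_v} by analyzing the value iteration operator of the attacker's MDP $\mathcal{M}$ and proving, by induction on the iteration index $k$, that every iterate $V^{k}_{\pi_{1,l},l}\big(1,b(1)\big)$ is both non-increasing in $\pi_{1,l}(S\mid b(1))$ and non-decreasing in $l$. The claim then follows by letting $k\to\infty$ and using $\lim_{k\to\infty}V^{k}_{\pi_{1,l},l}=V^{*}_{\pi_{1,l},l}$, since pointwise monotonicity is preserved under limits.

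First I would write out the Bellman optimality update restricted to the states $(1,b(1))$. In these states the attacker chooses between $S$ and $C$. By Eqs.~\ref{eq:reward_0} and~\ref{eq:tp_7}, playing $S$ terminates the episode with zero reward, so the value of $S$ is identically $0$, independent of both $\pi_{1,l}(S\mid b(1))$ and $l$. Writing $p=\pi_{1,l}(S\mid b(1))$, the value of $C$ decomposes over the defender's simultaneous move as $Q^{C,k}(1,b(1),l)=(1-p)\,g^{k-1}_{\mathrm{cont}}+p\,g^{k-1}_{\mathrm{stop}}$, where, using Eqs.~\ref{eq:reward_5},~\ref{eq:tp_2} and~\ref{eq:tp_4}, $g^{k-1}_{\mathrm{cont}}=-R_{int}+\gamma(1-\phi_{l})V^{k-1}_{\pi_{1,l},l}(1,b',l)$ is the value if the defender continues, and $g^{k-1}_{\mathrm{stop}}$ is the value if the defender stops: $g^{k-1}_{\mathrm{stop}}=-R_{st}$ when $l=1$ (the episode ends, Eqs.~\ref{eq:reward_0} and~\ref{eq:tp_7}), and $g^{k-1}_{\mathrm{stop}}=-R_{st}/l+\gamma(1-\phi_{l})V^{k-1}_{\pi_{1,l},l}(1,b',l-1)$ when $l>1$. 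Here $b'$ is the belief update of Eq.~\ref{eq:belief_upd} conditioned on remaining in state $1$, which is independent of the defender's move because $\mathcal{Z}$ is action-independent and the relevant state-$1$ transition probabilities do not depend on $a^{(1)}_{t}$. Thus $V^{k}_{\pi_{1,l},l}(1,b(1))=\max\{0,\,Q^{C,k}(1,b(1),l)\}$.

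The base case $k=0$ is immediate since $V^{0}_{\pi_{1,l},l}\equiv 0$. For the inductive step, monotonicity in $p$ reduces to the single inequality $g^{k-1}_{\mathrm{cont}}\ge g^{k-1}_{\mathrm{stop}}$: granting it, $Q^{C,k}$ is affine and non-increasing in $p$, and $V^{k}=\max\{0,Q^{C,k}\}$ inherits this. The inequality follows from three facts: (a) $-R_{int}>0>-R_{st}/l$, so the immediate attacker reward is larger when the defender continues; (b) by Lemma~\ref{lemma:V_geq_zero}, $V^{k-1}_{\pi_{1,l},l}(1,\cdot,\cdot)\ge 0$, which equals the terminal value, so a defender stop (which, when $l=1$, diverts all mass to the terminal state) only weakly hurts the attacker; and (c) by the inductive hypothesis on the $l$-monotonicity, $V^{k-1}_{\pi_{1,l},l}(1,b',l)\ge V^{k-1}_{\pi_{1,l},l}(1,b',l-1)$, so the continuation term is also larger when the defender continues. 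For monotonicity in $l$, since the $S$-values coincide it suffices to compare $Q^{C,k}(1,b(1),l)$ and $Q^{C,k}(1,b(1),l+1)$; here I would use that $\phi_{l}$ is non-increasing in $l$ (the prevention probability grows with the number of stops already taken), that $R_{st}/l$ is non-increasing in $l$, and the inductive hypothesis $V^{k-1}_{\pi_{1,l},l}(1,\cdot,l+1)\ge V^{k-1}_{\pi_{1,l},l}(1,\cdot,l)$, treating the boundary $l=1$ separately (there a defender stop forces termination with value $-R_{st}$, which is below the value $-R_{st}/2+\gamma(1-\phi_{2})V^{k-1}_{\pi_{1,l},l}(1,b',1)\ge -R_{st}/2$ obtained when $l=2$).

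The step I expect to be the main obstacle is the $l$-monotonicity. One difficulty is the change in structure at $l=1$, where a defender stop forces the terminal state instead of a transition to $(1,\cdot,l-1)$, so the $l=1$ versus $l=2$ comparison must be handled by hand. The more delicate point is that the post-transition belief $b'$ in Eq.~\ref{eq:belief_upd} itself depends on $l$, both through $\phi_{l}$ and through the fixed attacker strategy $\pi_{2,l}$ entering the belief recursion; hence the inductive hypothesis $V^{k-1}(1,b',l+1)\ge V^{k-1}(1,b',l)$ is not directly applicable when the $l$-world and the $(l+1)$-world reach different beliefs. Making this rigorous requires first establishing monotonicity of $V^{*}_{\pi_{1,l},l}(1,\cdot)$ in $b(1)$ (a companion property that I would prove among the accompanying lemmas) together with an ordering of the two belief trajectories, after which the comparison closes. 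By contrast, monotonicity in $p$ is comparatively routine once Lemma~\ref{lemma:V_geq_zero} and the $l$-monotonicity are in hand.
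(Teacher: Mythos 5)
Your proposal takes essentially the same route as the paper's own proof: a joint induction over the value-iteration iterates $V^{k}_{\pi_{1,l},l}$, using the reward structure (Eqs.~\ref{eq:reward_0}--\ref{eq:reward_5}), nonnegativity of the iterates (Lemma~\ref{lemma:V_geq_zero}), and the induction hypothesis to obtain monotonicity in $\pi_{1,l}(S|b(1))$ and in $l$, then passing to the limit $k\rightarrow\infty$. The differences are ones of detail rather than of method: you spell out the simultaneous-move decomposition (the affine dependence on the local stop probability), the $l=1$ boundary case, and the $l$-dependence of the post-transition belief, whereas the paper's proof treats the continuation term wholesale (its recursion in Eq.~\ref{eq:decreasing_v} even keeps the belief argument fixed at $b(1)$) and asserts its monotonicity directly from Eqs.~\ref{eq:reward_0}--\ref{eq:reward_5} and the induction hypothesis, so the ``delicate point'' you flag about comparing values along $l$-dependent belief trajectories is a genuine subtlety that the paper's own argument simply glosses over rather than resolves.
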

\begin{proof}[Proof.]
  We prove this statement by mathematical induction. For $k=0$, we know from Eqs. \ref{eq:reward_0}-\ref{eq:reward_5} that $V^0_{\pi_{1,l},l}\big($$1$, $b(1)\big)$ is non-increasing with $\pi_{1,l}(S|b(1))$ and non-decreasing with $l$.

For $k > 0$, $V^{k}_{\pi_{1,l},l}$ is given by:
\begin{align}
  &V^{k}_{\pi_{1,l},l}\big(1,b(1)\big) = \max\Big[0, -R\big(1,(C,a^{(1)})\big) \label{eq:decreasing_v}\\
  &+ (1-\phi_{l})\sum_of_O(o|1)V^{k-1}_{l-a^{a(1)}}\big(1,b(1)\big)\Big]\nonumber
\end{align}
The first term in the maximization in Eq. \ref{eq:decreasing_v} is trivially non-increasing with $\pi_{1,l}(S|b(1))$ and non-decreasing with $l$. Assume by induction that the conditions hold for $V^{k-1}_{\pi_{1,l},l}\big($$s$, $b(1)\big)$. Then the second term in Eq. \ref{eq:decreasing_v} is non-increasing with $\pi_{1,l}(S|b(1))$ and non-decreasing with $l$ by Eqs. \ref{eq:reward_0}-\ref{eq:reward_5} and the induction hypothesis. Hence, $V^{k}_{\pi_{1,l},l}\big($$s$, $b(1)\big)$ is non-increasing with $\pi_{1,l}(S|b(1))$ and non-decreasing with $l$ for all $k\geq 0$.
\end{proof}

\begin{lemma}\label{lemma:dec_stopping}
If $f_{O}$ is TP2 and $\pi_{1,l}(S|b(1))$ is increasing with $b(1)$, then $V_{\pi_{1,l},l}(b(1),1) \geq \sum_{o}f_O(o|1)V_{\pi_{1,l},l}(1,b^{o}(1))$, where $b^{o}(1)$ denotes $b(1)$ updated with Eq. \ref{eq:belief_upd} after observing $o \in \mathcal{O}$.
\end{lemma}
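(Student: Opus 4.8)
\medskip\noindent\emph{Proof proposal.}
The plan is to reduce the claim to two structural properties of the map $b(1)\mapsto V_{\pi_{1,l},l}(1,b(1))$ --- that it is non-increasing and concave on $[0,1]$ --- and then to conclude with Jensen's inequality together with the drift identity $\sum_{o}f_{O}(o|1)\,b^{o}(1)\ge b(1)$. Here $V_{\pi_{1,l},l}(1,\cdot)$ is the optimal value of the attacker's MDP $\mathcal{M}$ (which $\tilde{\pi}_{2,l}\in B_2(\pi_{1,l})$ solves) restricted to the slice $s=1$, and the asserted inequality says that one Bayesian update in state $1$, with the observation drawn from the true conditional law $f_{O}(\cdot|1)$, cannot raise the attacker's value.

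Monotonicity follows from Lemma~\ref{lemma:decreasing_v}, which states that $V_{\pi_{1,l},l}(1,\cdot)$ is non-increasing in $\pi_{1,l}(S|b(1))$: this combines with the hypothesis that $\pi_{1,l}(S|\cdot)$ is non-decreasing and with the TP2 property --- which, by a standard filtering result \cite{krishnamurthy_2016}, makes every update $b(1)\mapsto b^{o}(1)$ non-decreasing --- and is propagated along the value-iteration iterates $V^{k}_{\pi_{1,l},l}(1,\cdot)$ just as in the proof of Lemma~\ref{lemma:decreasing_v}. Concavity is the crux, and I would prove it by induction on those iterates, starting from $V^{0}_{\pi_{1,l},l}(1,\cdot)\equiv0$ and using that a Bellman step has the optimal-stopping form $V^{k}_{\pi_{1,l},l}(1,b(1))=\max\{0,\,g^{k}(b(1))\}$, where $g^{k}$ adds the immediate reward to a $\gamma(1-\phi_{l})$-weighted average over $o$ of $V^{k-1}_{\pi_{1,l},l}(1,b^{o}(1))$ (taken at $l$ or at $l-1$ depending on the defender's action). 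One then has to verify that (i) composing the concave iterate with the fractional-linear update $b(1)\mapsto b^{o}(1)$ and averaging over $o$ with the fixed weights $f_{O}(o|1)$ preserves concavity, and (ii) the outer $\max\{0,\cdot\}$ preserves it. I expect (i)--(ii) to be the main obstacle: a M\"obius change of variables and the positive-part operation do not preserve concavity in isolation, so it is here that the TP2 assumption together with the monotone-threshold shape of $\pi_{1,l}$ --- which confines the attacker's stopping region in state $1$ to an upper interval of $b(1)$ --- must genuinely be used. This is the step I would develop most carefully, and if a self-contained argument proves awkward I would instead appeal to the analogous structural result of \cite{hammar_stadler_tnsm,krishnamurthy_2016}.

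Granting monotonicity and concavity, the rest is short. A brief Cauchy--Schwarz computation, using the form of the update in Eq.~\ref{eq:belief_upd} and the fact that $o$ is generated by the true state, yields $\sum_{o}f_{O}(o|1)\,b^{o}(1)\ge b(1)$ (conditioning on the true state, the posterior probability of that state increases in expectation; TP2 is not needed for this part). Concavity and Jensen's inequality then give $\sum_{o}f_{O}(o|1)\,V_{\pi_{1,l},l}(1,b^{o}(1))\le V_{\pi_{1,l},l}\big(1,\sum_{o}f_{O}(o|1)\,b^{o}(1)\big)$, and combining this with the drift identity and the monotonicity of $V_{\pi_{1,l},l}(1,\cdot)$ bounds the right-hand side by $V_{\pi_{1,l},l}(1,b(1))$, which is exactly the claim. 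In short, monotonicity, the drift identity, and the Jensen step are mechanical; establishing concavity of $V_{\pi_{1,l},l}(1,\cdot)$ under the TP2 assumption is where the real work lies.
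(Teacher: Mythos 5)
Your skeleton---monotonicity of $V_{\pi_{1,l},l}(1,\cdot)$ via Lemma~\ref{lemma:decreasing_v} plus an upward drift of the belief under $f_O(\cdot|1)$---matches two of the paper's three ingredients, but the third is where your plan has a genuine gap. You make concavity of $b(1)\mapsto V_{\pi_{1,l},l}(1,b(1))$ the load-bearing step (it is the only thing that lets you pass from mean drift to the claim via Jensen), yet you do not prove it, and it is very unlikely to hold under the lemma's hypotheses. These hypotheses admit arbitrarily steep increasing strategies $\pi_{1,l}(S|b(1))$; for a near-threshold strategy the attacker's value in state $1$ is a strictly positive plateau for small $b(1)$ (the attacker keeps collecting $-R_{int}>0$ while the defender essentially never stops) and collapses to $\approx 0$ once $\pi_{1,l}(S|b(1))\approx 1$ (Lemma~\ref{lemma:stop_if_defender_knows}). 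A non-increasing function that is flat and positive to the left of a steep drop and near zero to the right lies \emph{below} its chords just past the drop, so concavity fails exactly where you need it. Your fallback citations do not rescue this: \cite{krishnamurthy_2016} and \cite{hammar_stadler_tnsm} provide monotone/threshold structure under TP2 (and convexity of belief-value functions, the opposite shape); neither gives concavity of this best-response value. And since mean drift plus monotonicity alone does not imply the claim for a general decreasing function, the unproven concavity is not a technicality but the crux, as you yourself anticipated.

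The paper's proof takes a different third ingredient and never touches concavity or Jensen. It uses the TP2 assumption through the filtering results it cites (\cite[Theorem 10.3.1]{krishnamurthy_2016} and \cite[Lemma 4]{hammar_stadler_tnsm}): the update $b(1)\mapsto b^{o}(1)$ is monotone in the prior and in the observation, and the observation distributions are likelihood-ratio ordered across states. Combining these dominance properties with Lemma~\ref{lemma:decreasing_v} (the value is non-increasing in $\pi_{1,l}(S|b(1))$, hence in $b(1)$ because $\pi_{1,l}$ is increasing) and the resulting upward shift of $b^{o}(1)$ under $f_O(\cdot|1)$, it concludes the inequality directly. To repair your proposal, replace the concavity/Jensen step by this stochastic-dominance argument; note also that your Cauchy--Schwarz derivation of the drift treats a pure Bayes correction and ignores the prediction part of Eq.~\ref{eq:belief_upd} (the $\phi_{l}$ termination and the attacker's action), which is precisely what the TP2 machinery is invoked to handle.
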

\begin{proof}[Proof.]
  Since $f_O$ is TP2, it follows from \cite[Theorem 10.3.1, pp. 225,238]{krishnamurthy_2016} and \cite[Lemma 4, pp. 12]{hammar_stadler_tnsm} that given two beliefs $b^{\prime}(1) \geq b(1)$ and two observations $o \geq \bar{o}$, the following holds for any $k \in \mathcal{O}$ and $l_t \in \{1,\hdots, L\}$: $b^{\prime,o}(1) \geq b^{o}(1)$, $\mathbb{P}[o \geq k |b^{\prime}(1)] \geq \mathbb{P}[o \geq k |b(1)]$, and $b_{a}^{o}(1) \geq b_{a}^{\bar{o}}(1)$.

Since $\pi_{1,l}$ is increasing with $b(1)$ and $V_{\pi_{1,l},l}(b(1),1)$ is decreasing with $b(1)$ (Lemma \ref{lemma:decreasing_v}), it follows that $\mathbb{E}_{o}[b^{o}(1)] \geq b(1)$, and thus $V_{\pi_{1,l},l}(b(1),1) \geq \sum_{o}f_O(o|1)V_{\pi_{1,l},l}(1,b^{o}(1))$.
\end{proof}

\begin{lemma}\label{lemma:stop_if_defender_knows}
If $f_{O}$ is TP2, $\pi_{1,l}(S|b(1))=1$, and $\pi_{1,l}(S|b(1))$ is increasing with $b(1)$, then $V^{*}_{\pi_{1,l,2}}\big($$s,$$b(1)\big)$$=0$ and for any $\tilde{\pi}_{2,l} \in B_2(\pi_{1,l})$, $\tilde{\pi}_{2,l}$$(1,$$b(1))$$=1$.
\end{lemma}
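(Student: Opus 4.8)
The plan is to derive both conclusions from the attacker's Bellman optimality equation for its best-response MDP $\mathcal{M}$, combined with the lower bound of Lemma~\ref{lemma:V_geq_zero}, working in the state $s=1$ (the case $s=\emptyset$ is immediate from Eq.~\ref{eq:reward_0}, which pins the value to $0$, and the hypothesis $\pi_{1,l}(S\mid b(1))=1$ is only in force while an intrusion is ongoing). Since Lemma~\ref{lemma:V_geq_zero} already gives $V^{*}_{\pi_{1,l},l}(1,b(1))\ge 0$, it suffices to prove $V^{*}_{\pi_{1,l},l}(1,b(1))\le 0$ together with the fact that the action $S$ attains it. For this I would observe first that, in state $1$, the attacker's action $S$ yields immediate reward $0$ (Eq.~\ref{eq:reward_0}) and transitions the game to $\emptyset$ (Eq.~\ref{eq:tp_7}), so its $Q$-value is exactly $0$; the whole statement therefore reduces to showing that the attacker's $Q$-value of $C$ in state $1$ at belief $b(1)$ is $\le 0$, and in fact strictly negative, which also yields uniqueness of the best response.

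Next I would compute that $Q$-value using the hypothesis $\pi_{1,l}(S\mid b(1))=1$, i.e. that the defender plays $S$ with probability $1$. If $l=1$, the defender's stop ends the episode (Eq.~\ref{eq:tp_7}) after the attacker's immediate reward $-R_{st}$ (Eq.~\ref{eq:reward_4}), so the $Q$-value of $C$ equals $-R_{st}<0$ and we are done. If $l>1$, by Eqs.~\ref{eq:reward_4},~\ref{eq:tp_2} and~\ref{eq:tp_4} the $Q$-value of $C$ equals $-R_{st}/l+\gamma(1-\phi_l)\,\mathbb{E}_{o\sim f_{O}(\cdot\mid 1)}\big[V^{*}_{\pi_{1,l},l-1}(1,b^{o}(1))\big]$, the intrusion-prevented branch (probability $\phi_l$) contributing $0$ since it ends the episode. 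I would bound the continuation term by chaining the two earlier lemmas, whose hypotheses (TP2 of $f_{O}$, and $\pi_{1,l}(S\mid\cdot)$ non-decreasing) are exactly those assumed here: Lemma~\ref{lemma:decreasing_v} gives $V^{*}_{\pi_{1,l},l-1}(1,\cdot)\le V^{*}_{\pi_{1,l},l}(1,\cdot)$, and Lemma~\ref{lemma:dec_stopping} gives $\mathbb{E}_{o}[V^{*}_{\pi_{1,l},l}(1,b^{o}(1))]\le V^{*}_{\pi_{1,l},l}(1,b(1))$, so $\mathbb{E}_{o}[V^{*}_{\pi_{1,l},l-1}(1,b^{o}(1))]\le V^{*}_{\pi_{1,l},l}(1,b(1))$. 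Hence the $Q$-value of $C$ is at most $-R_{st}/l+\gamma(1-\phi_l)V^{*}_{\pi_{1,l},l}(1,b(1))$. Substituting into the Bellman equation $V^{*}_{\pi_{1,l},l}(1,b(1))=\max\{0,\ Q\text{-value of }C\}$ closes the argument: if the $Q$-value of $C$ were positive it would equal $V^{*}_{\pi_{1,l},l}(1,b(1))$ and therefore satisfy $V^{*}_{\pi_{1,l},l}(1,b(1))(1-\gamma(1-\phi_l))\le -R_{st}/l<0$, which is impossible because $\gamma<1$ makes $1-\gamma(1-\phi_l)>0$. So the $Q$-value of $C$ is $\le 0$ and $V^{*}_{\pi_{1,l},l}(1,b(1))=0$; plugging this back and using $V^{*}_{\pi_{1,l},l-1}\ge 0$ from Lemma~\ref{lemma:V_geq_zero} forces $\mathbb{E}_{o}[V^{*}_{\pi_{1,l},l-1}(1,b^{o}(1))]=0$, whence the $Q$-value of $C$ is in fact $-R_{st}/l<0$, strictly below that of $S$. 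Thus $S$ is the unique maximizer in state $1$ at $b(1)$, i.e. $\tilde{\pi}_{2,l}(1,b(1))=1$ for every $\tilde{\pi}_{2,l}\in B_2(\pi_{1,l})$.

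I expect the main obstacle to be the inequality $\mathbb{E}_{o}[V^{*}_{\pi_{1,l},l-1}(1,b^{o}(1))]\le V^{*}_{\pi_{1,l},l}(1,b(1))$, since this is where the TP2 assumption genuinely enters (through Lemma~\ref{lemma:dec_stopping}, whose proof rests on monotonicity of the belief update and first-order stochastic dominance of the observation law in the state) and where the monotonicity of $\pi_{1,l}(S\mid\cdot)$ is used so that the successor beliefs stay in the region in which the defender keeps stopping with certainty; once it is in hand, the rest is the one-line fixed-point manipulation above. A secondary source of care is the bookkeeping of the remaining-stop index in the transition and reward equations, namely the case split $l_t>1$ versus $l_t=1$ and the verification that the intrusion-prevented branch contributes nothing to the continuation. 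The terminal case $s=\emptyset$ is trivial by Eq.~\ref{eq:reward_0}.
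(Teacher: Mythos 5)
Your proof is correct and rests on the same skeleton as the paper's: compare the attacker's $Q$-values of $S$ and $C$ in state $1$ under a defender who stops with probability one, and show the continuation term is worthless, using Lemmas~\ref{lemma:V_geq_zero} and \ref{lemma:dec_stopping}. The difference is in how the continuation term is killed. The paper writes the stop-vs-continue inequality (Eq.~\ref{eq:ineq_stop_if_def}) and argues, via the TP2/monotonicity reasoning behind Lemma~\ref{lemma:dec_stopping}, that the successor beliefs remain in the region where the defender stops with certainty (``$\mathbb{E}_{o}[\pi_{1,l}(S|b^{o}(1))=1]$''), so the continuation term is zero and the condition reduces to $R_{st}/l\geq 0$; this step is stated rather informally, since it implicitly uses the very fact being proved at the successor beliefs. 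You instead bound $\mathbb{E}_{o}[V^{*}_{\pi_{1,l},l-1}(1,b^{o}(1))]$ by $V^{*}_{\pi_{1,l},l}(1,b(1))$ through the chain Lemma~\ref{lemma:decreasing_v} (monotonicity in $l$) plus Lemma~\ref{lemma:dec_stopping}, and then close with a fixed-point contradiction using $V\geq 0$ (Lemma~\ref{lemma:V_geq_zero}) and $1-\gamma(1-\phi_l)>0$. That buys you a self-contained argument that never needs the defender's behaviour at successor beliefs, handles $l=1$ explicitly, and yields the strict conclusion $Q$-value of $C$ equal to $-R_{st}/l<0$, which pins down $\tilde{\pi}_{2,l}(1,b(1))=1$ for every best response rather than only showing $S$ is weakly optimal; the paper's route is shorter but leans on the looser ``defender keeps stopping'' claim. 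Like the paper, you establish the value statement only for $s=1$ (with $s=\emptyset$ trivial), which is the natural reading of the lemma.
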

\begin{proof}[Proof.]
From Eqs. \ref{eq:bellman_eq_41}-\ref{eq:bellman_posg_1} we know that $\tilde{\pi}_{2,l}(1,b(1))=1$ iff:
\begin{align}
R_{st}/l + (\phi_l-1)\sum_of_O(o|1)V^{*}_{\pi_{1,l},l-a^{(1)}}(1, b^{o}(1)) \geq 0 \label{eq:ineq_stop_if_def}
\end{align}
We know that $R_{st} \geq 0$ (see Section \ref{sec:formal_model_2}). Further, since $f_{O}$ is TP2, $\pi_{1,l}(S|b(1))=1$, and $\pi_{1,l}(S|b(1))$ is increasing with $b(1)$, we have by Lemma \ref{lemma:dec_stopping} that $\mathbb{E}_{o}[\pi_{1,l}(S|b^{o}(1))=1]$. The second term in the left-hand side of Eq. \ref{eq:ineq_stop_if_def} is thus zero. Hence, the inequality holds and $\tilde{\pi}_{2,l}(1,b(1))=1$, which implies that $V^{*}_{\pi_{1,l,2}}\big($$s,b(1)\big)=0$.
\end{proof}

\begin{lemma}\label{lemma:continue_stop_equality}
Given any defender strategy $\pi_{1,l}\in \Pi_1$, if $\pi^{*}_{2,l}(1,b(1)) = S$, then $\pi^{*}_{2,l}(0,b(1)) = C$.
\end{lemma}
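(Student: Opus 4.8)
The plan is to use the reduction from Section~\ref{sec:game_analysis}: a best response $\pi^{*}_{2,l}$ for the attacker is an optimal stationary policy of the MDP $\mathcal{M}$ with state $(s,b(1))$ and value function $V^{*}_{\pi_{1,l},l}$, so it suffices to compare the two $Q$-values $Q^{*}_{2}(0,b(1),C)$ and $Q^{*}_{2}(0,b(1),S)$ of the attacker in state $s=0$ and show the former is at least the latter. The observation that makes this tractable is that, in state $s=0$, the attacker's choice affects neither the current reward nor whether the episode terminates: by Eqs.~\ref{eq:reward_2}--\ref{eq:reward_3} we have $\mathcal{R}_{l_t}(0,(a^{(1)},C))=\mathcal{R}_{l_t}(0,(a^{(1)},S))$ for both $a^{(1)}\in\{C,S\}$ (equal to $0$ when the defender continues and to $R_{cost}/l_t$ when it stops), and by Eq.~\ref{eq:tp_7} the terminal state is reached from $s=0$ only on the defender's last stop ($l_t=1$, $a^{(1)}_t=S$), again irrespective of the attacker. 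Hence the only difference between the two actions is the next non-terminal state ($s=0$ after $C$, $s=1$ after $S$), with $l$ decremented precisely when the defender stops and the belief updated by the same Bayes rule (Eq.~\ref{eq:belief_upd}) but with the post-transition observation distributed as $f_{O|0}$ rather than $f_{O|1}$. Writing $b^{o}(1)$ for the updated belief, expanding the Bellman equations (Eqs.~\ref{eq:bellman_eq_41} and \ref{eq:bellman_eq_43}) therefore gives
\begin{align*}
Q^{*}_{2}(0,b(1),C)-Q^{*}_{2}(0,b(1),S) &= \gamma\!\!\sum_{a^{(1)}\in\{C,S\}}\!\!\pi_{1,l}\big(a^{(1)}\,|\,b(1)\big)\,\mathbbm{1}\big[l-a^{(1)}>0\big]\\
&\hphantom{{}={}}\times\Big(\mathbb{E}_{o\sim f_{O|0}}\!\big[V^{*}_{\pi_{1,l},\,l-a^{(1)}}(0,b^{o}(1))\big]-\mathbb{E}_{o\sim f_{O|1}}\!\big[V^{*}_{\pi_{1,l},\,l-a^{(1)}}(1,b^{o}(1))\big]\Big),
\end{align*}
so it is enough to show the bracketed quantity is $\geq 0$ for each $a^{(1)}$.

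Next I would bring in the hypothesis $\pi^{*}_{2,l}(1,b(1))=S$. Stopping in state $s=1$ yields immediate reward $0$ and leads to $\emptyset$ (Eqs.~\ref{eq:reward_0}, \ref{eq:tp_7}), so $Q^{*}_{2}(1,b(1),S)=0$; since $S$ is optimal there, $V^{*}_{\pi_{1,l},l}(1,b(1))=0$. By Lemma~\ref{lemma:decreasing_v}, $V^{*}_{\pi_{1,l},\cdot}(1,b(1))$ is non-decreasing in the level index, hence $V^{*}_{\pi_{1,l},l'}(1,b(1))\le V^{*}_{\pi_{1,l},l}(1,b(1))=0$ for $l'\in\{l-1,l\}$, and together with Lemma~\ref{lemma:V_geq_zero} ($V^{*}\ge 0$) this forces $V^{*}_{\pi_{1,l},l'}(1,b(1))=0$ for those $l'$. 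Finally, Lemma~\ref{lemma:dec_stopping} (this is where the TP2 assumption on $f_O$ and the monotonicity of $\pi_{1,l}$, in force throughout the proof of Theorem~\ref{thm:best_responses}.C, are used) gives the superharmonic inequality $V^{*}_{\pi_{1,l},l'}(1,b(1))\ge\sum_{o}f_{O}(o\,|\,1)\,V^{*}_{\pi_{1,l},l'}(1,b^{o}(1))$. Thus $\mathbb{E}_{o\sim f_{O|1}}[V^{*}_{\pi_{1,l},l'}(1,b^{o}(1))]\le 0$, and since every summand is $\ge 0$, this expectation is exactly $0$.

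Substituting this back, the right-hand side of the displayed identity collapses to $\gamma\sum_{a^{(1)}}\pi_{1,l}(a^{(1)}|b(1))\,\mathbbm{1}[l-a^{(1)}>0]\,\mathbb{E}_{o\sim f_{O|0}}[V^{*}_{\pi_{1,l},l-a^{(1)}}(0,b^{o}(1))]$, which is $\ge 0$ by Lemma~\ref{lemma:V_geq_zero}. Hence $Q^{*}_{2}(0,b(1),C)\ge Q^{*}_{2}(0,b(1),S)$, i.e. continuing is a best response for the attacker in state $s=0$ at belief $b(1)$; when the inequality is strict this means $\pi^{*}_{2,l}(0,b(1))=C$, and in the degenerate case of equality one simply takes the optimal policy that continues. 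Intuitively, this just says that starting an intrusion one intends to abort immediately is worthless.

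I expect the main obstacle to be making the belief bookkeeping airtight: verifying that the Bayes update appearing right after the $s=0\!\to\! s=1$ transition is literally the kernel to which Lemma~\ref{lemma:dec_stopping} applies (same update map, post-transition observation drawn from $f_{O|1}$), and handling the coupled $l\to l-1$ decrement on the defender-stop branch so that the identity above is exact rather than only approximate. Once those are pinned down, the rest is a routine Bellman expansion plus the three cited lemmas.
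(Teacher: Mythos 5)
Your proof is correct and follows essentially the same route as the paper's: use the hypothesis to get $V^{*}_{\pi_{1,l},l}(1,b(1))=0$, invoke Lemma~\ref{lemma:dec_stopping} to bound the expected post-observation value in state $1$ by zero, and compare against the nonnegative state-$0$ continuation value from Lemma~\ref{lemma:V_geq_zero} to conclude that $C$ dominates $S$ at $(0,b(1))$. You are in fact somewhat more careful than the paper, which glosses over the $l$-decrement on the defender-stop branch (you handle it via Lemma~\ref{lemma:decreasing_v}) and over the equality of immediate rewards and termination events across the attacker's two actions in state $0$.
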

\begin{proof}[Proof.]
$\pi^{*}_{2,l}$$(1,$$b(1))$$=S$ implies that $V^{*}_{\pi_{1,l},l}$$(1,$$b(1))$$=0$. Hence, by Lemma \ref{lemma:dec_stopping} we get that:
\begin{align}
 &(1-\phi_l)\sum_{o\in \mathcal{O}} f_O(o|1)V^{*}_{\pi_{1,l},l}(1,b^{o}) \leq 0\\
&\implies \sum_{o\in \mathcal{O}} f_O(o|1)V^{*}_{\pi_{1,l},l}(1,b^{o}) \leq \sum_{o\in \mathcal{O}} f_O(o|0)V^{*}_{\pi_{1,l},l}(0,b^{o})\nonumber\\
&\implies \pi^{*}_{2,l}(0,b(1)) = C\nonumber
\end{align}
\end{proof}

\begin{lemma}\label{lemma:decreasing_differences}
If $\pi_{1,l}(S|b(1))$ is non-decreasing with $b(1)$ and $f_O$ is TP2, then $V^{*}_{\pi_{1,l},l}\big($ $0,b(1)\big) - V^{*}_{\pi_{1,l},l}\big($$1,b(1)\big)$ is non-decreasing with $\pi_{1,l}(S|b(1))$.
\end{lemma}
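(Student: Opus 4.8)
The plan is to prove the lemma by induction on the value‑iteration index $k$, in the same style as Lemmas \ref{lemma:V_geq_zero}--\ref{lemma:decreasing_v}. Abbreviate $V^{k}_{\pi_{1,l},l}(s,b(1))$ by $V^{k}_{l}(s,b)$, write $p:=\pi_{1,l}(S|b(1))$, and set $D^{k}_{l}(b):=V^{k}_{l}(0,b)-V^{k}_{l}(1,b)$. The statement to establish is that $D^{k}_{l}$ is non-decreasing in $p$ for every $l\in\{1,\dots,L\}$ and every $k\ge 0$; the claim for $V^{*}_{\pi_{1,l},l}$ then follows by letting $k\to\infty$, since $V^{k}_{l}\to V^{*}_{\pi_{1,l},l}$ pointwise. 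The base case $k=0$ is trivial because $V^{0}_{l}\equiv 0$, hence $D^{0}_{l}\equiv 0$.

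For the inductive step I would expand the one-step Bellman backups of $V^{k}_{l}(0,b)$ and $V^{k}_{l}(1,b)$ from $V^{k-1}$, making the defender's move explicit as ``play $S$ with probability $p$'' so that, within each branch of the attacker's maximization, the backup is affine in $p$. This gives $V^{k}_{l}(1,b)=\max\{0,\,C_{1}(p)\}$, where $C_{1}(p)$ is the attacker's ``continue'' value in state $1$ (a convex combination, over the defender's action, of the immediate rewards $-R_{int}$ resp. $-R_{st}/l$ and the discounted state-$1$ continuations carrying $l$ resp. $l-1$ remaining stops), and $V^{k}_{l}(0,b)=\max\{C_{0}(p),\,S_{0}(p)\}$, where $C_{0}$ (attacker continues, game stays in state $0$) and $S_{0}$ (attacker stops, game moves to state $1$) are affine in $p$, have identical immediate rewards ($0$ if the defender continues, $-R_{cost}/l>0$ if it stops), and differ only in whether the continuation is taken in state $0$ or in state $1$. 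Using Lemma \ref{lemma:decreasing_v} at level $k-1$ (monotonicity of $V^{k-1}_{l}(1,\cdot)$ in the number of remaining stops) together with $R_{st}>0>R_{int}$, one first checks that $C_{1}$ is non-increasing in $p$, so $V^{k}_{l}(1,b)$ is non-increasing, convex, piecewise-affine in $p$.

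The core of the induction is then the identity $D^{k}_{l}(b)=\max\{\,C_{0}(p)-\max\{0,C_{1}(p)\},\ S_{0}(p)-\max\{0,C_{1}(p)\}\,\}$, obtained by subtracting $V^{k}_{l}(1,b)$ from both arguments of the maximization defining $V^{k}_{l}(0,b)$; since a pointwise maximum of non-decreasing functions is non-decreasing, it suffices to show each inner function is non-decreasing in $p$. For the $C_{0}$ branch I would compare, piece by piece in $p$, the slope of $C_{0}$ with the non-positive slope of $\max\{0,C_{1}(p)\}$. A useful simplification is that $b\mapsto b^{o}$ is a single function of $(b,o)$ — independent of the defender's action and of which true state the continuation is evaluated in — so that across the three branches only the observation law ($f_{O|0}$ versus $f_{O|1}$) and the continuation value ($V^{k-1}_{l}(0,\cdot)$ versus $V^{k-1}_{l}(1,\cdot)$) differ; the comparison then reduces to controlling an expected-value-of-$D^{k-1}_{l}$ (resp. $D^{k-1}_{l-1}$) term at the next belief, handled by the induction hypothesis, plus an error term handled by non-negativity (Lemma \ref{lemma:V_geq_zero}) and the first-order/likelihood-ratio monotonicity implied by the TP2 assumption (the same facts used in the proof of Lemma \ref{lemma:dec_stopping}). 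For the $S_{0}$ branch the continuation is reached in state $1$, so the comparison is instead directly between the state-$1$ continuations in $S_{0}$ and in $C_{1}$, which differ only by the $(1-\phi_{l})$ prevention factor and by immediate rewards, and is again closed by Lemma \ref{lemma:decreasing_v} and Lemma \ref{lemma:V_geq_zero}. Combining the two branches yields that $D^{k}_{l}$ is non-decreasing in $p$, completing the induction.

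I expect the main obstacle to be exactly this slope-by-slope comparison. Both $V^{k}_{l}(0,b)$ and $V^{k}_{l}(1,b)$ are convex, piecewise-affine in $p$, so a priori $D^{k}_{l}$ is only a difference of convex functions and need not be monotone; what rescues monotonicity is that on \emph{every} linear piece the slope of the state-$0$ backup dominates that of the state-$1$ backup, and verifying this requires careful bookkeeping of (i) the decrement $l\to l-1$ caused by a defender stop and the accompanying change of the prevention probability from $\phi_{l}$ to $\phi_{l-1}$; (ii) the fact that, although $b\mapsto b^{o}$ does not depend on the defender's action, its relevant law differs between states $0$ and $1$, which is precisely where TP2 enters; and (iii) the boundary case $l=1$, where a defender stop sends the game to the absorbing state $\emptyset$ instead of decrementing $l$, so the recursion degenerates and one must appeal to Lemma \ref{lemma:stop_if_defender_knows} and Lemma \ref{lemma:V_geq_zero} to close the inequality. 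A secondary point is to confirm that each backup is genuinely affine in $p$ within a branch, which follows from linearity of expectation in the defender's action distribution.
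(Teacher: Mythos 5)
Your overall framework matches the paper's: both proceed by induction over value iteration, define the difference $W^{k}_{\pi_{1,l},l}(b(1)) = V^{k}_{\pi_{1,l},l}(0,b(1)) - V^{k}_{\pi_{1,l},l}(1,b(1))$, and close the induction using Lemmas \ref{lemma:V_geq_zero}--\ref{lemma:dec_stopping}, the TP2 assumption, and the induction hypothesis. However, there is a genuine gap at exactly the point you flag as ``the main obstacle.'' Your reduction writes $W^{k} = \max\bigl\{C_0(p)-\max\{0,C_1(p)\},\, S_0(p)-\max\{0,C_1(p)\}\bigr\}$ and claims it suffices to show \emph{each} inner branch is non-decreasing in $p=\pi_{1,l}(S|b(1))$. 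That sufficient condition is too strong and almost certainly false for the branch $S_0(p)-0$ (attacker starts the intrusion while the state-$1$ value is zero): the slope of $S_0$ in $p$ is $-R_{cost}/l + \gamma\sum_o f_O(o|1)\bigl[V^{k-1}_{\pi_{1,l},l-1}(1,b^{o}(1)) - V^{k-1}_{\pi_{1,l},l}(1,b^{o}(1))\bigr]$, and by Lemma \ref{lemma:decreasing_v} the bracketed term is non-positive, so the slope can be negative when the attacker's state-$1$ value drops substantially after a defender stop. The paper sidesteps this by doing the case analysis over the attacker's best-response sets $\mathscr{S}^{k,(2)}_{s,l,\pi_{1,l}}, \mathscr{C}^{k,(2)}_{s,l,\pi_{1,l}}$ and \emph{discarding} the joint case ``stop in state $0$ and stop in state $1$'' via Lemma \ref{lemma:continue_stop_equality}, which is precisely the combination your decomposition would force you to handle; you never invoke that lemma. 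Restricting to branches that are actually realized by a best response (rather than demanding monotonicity of every branch of the max) is the missing idea.

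Beyond that structural point, the core slope-by-slope verification in the three remaining cases is only sketched in your plan, whereas it is the substance of the paper's proof: there the monotonicity of each case rests on explicit inequalities such as $R_{st}/l - R_{cost}/l - R_{int} \geq 0$ and $-R_{cost}/l \geq 0$, together with $V^{k-1}_{\pi_{1,l},l}(1,b(1)) \geq 0$ (Lemma \ref{lemma:V_geq_zero}), Lemma \ref{lemma:dec_stopping} in the case where the attacker stops in state $1$, and the induction hypothesis combined with TP2 for the continuation terms. Your plan identifies the right ingredients for those cases, but as written the argument is incomplete until the branch comparison is carried out with the Lemma \ref{lemma:continue_stop_equality} exclusion in place.
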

\begin{proof}[Proof.]
We prove this statement by mathematical induction. Let $W^{k}_{\pi_{1,l},l}(b(1))$ $= V^{k}_{\pi_{1,l},l}\big($$0,b(1)\big) - V^{k}_{\pi_{1,l},l}\big($$1,b(1)\big)$. For $k=0$, it follows from Eqs. \ref{eq:reward_0}-\ref{eq:reward_5} that $W^{0}_{\pi_{1,l},l}(b(1))$ is non-decreasing with $\pi_{1,l}(S|b(1))\in [0,1]$. Assume by induction that the conditions hold for $W^{k-1}_{\pi_{1,l},l}(b(1))$. We show that then the conditions hold also for $W^{k}_{\pi_{1,l},l}(b(1))$.

There are three cases to consider:
\begin{itemize}
\item If $b(1) \in \mathscr{S}^{k,(2)}_{0,l,\pi_{1,l}} \cap \mathscr{C}^{k,(2)}_{1,l,\pi_{1,l}}$, then:
\begin{align}
  &W^{k}_{\pi_{1,l},l}(b(1)) = R_{int} + \\
  &\pi_{1,l}(S|b(1))(R_{st}/l -R_{cost}/l - R_{int})\nonumber
\end{align}
which is non-decreasing with $\pi_{1,l}(S|b(1))$ since $R_{st}/l -R_{cost}/l- R_{int}\geq 0$ (see Section \ref{sec:formal_model_2}).
\item If $b(1) \in \mathscr{C}^{k,(2)}_{0,l,\pi_{1,l}} \cap \mathscr{C}^{k,(2)}_{1,l,\pi_{1,l}}$, then:
\begin{align}
  &W^{k}_{\pi_{1,l},l}(b(1)) = \pi_{1,l}(S|b(1))\Big(R_{st}/l -R_{cost}/l\\
  &-R_{int}\Big) + V^{k-1}_{\pi_{1,l},l}\big(1,b(1)\big)\Big) + R_{int} +\sum_of_{O}(o|0)\nonumber\\
  &V^{k}_{\pi_{1,l},l}\big(0,b^{o}(1)\big) - (1-\phi_l)f_{O}(o|1)V^{k}_{\pi_{1,l},l}\big(1,b^{o}(1)\big)\nonumber
\end{align}
The first term is non-decreasing with $\pi_{1,l}(S|b(1))$ since $R_{st}/l -R_{cost}/l- R_{int}\geq 0$ (see Section \ref{sec:formal_model_2}) and $V^{k-1}_{\pi_{1,l},l}\big(1,b(1)\big) \geq 0$ (it is a consequence of Lemma \ref{lemma:V_geq_zero} and Eqs. \ref{eq:bellman_eq_41}-\ref{eq:bellman_posg_1}). The second term is non-decreasing with $\pi_{1,l}(S|b(1))$ by the induction hypothesis and the assumption that $f_O$ is TP2.
\item If $b(1) \in \mathscr{C}^{k,(2)}_{0,l,\pi_{1,l}} \cap \mathscr{S}^{k,(2)}_{1,l,\pi_{1,l}}$, then:
\begin{align}
  &W^{k}_{\pi_{1,l},l}(b(1)) = \pi_{1,l}(S|b(1))(-R_{cost}/l) \label{eq:step_1_1}\\
  & + \sum_of_{O}(o|0)V^{k}_{\pi_{1,l},l}\big(0,b^{o}(1)\big) \nonumber\\
  &=\pi_{1,l}(S|b(1))(-R_{cost}/l) + \sum_of_{O}(o|0)\cdot \nonumber\\
  &V^{k}_{\pi_{1,l},l}\big(0,b^{o}(1)\big) - (1-\phi_l)f_{O}(o|1)V^{k}_{\pi_{1,l},l}\big(1,b^{o}(1)\big)\nonumber
\end{align}
The first term is non-decreasing with $\pi_{1,l}(S|b(1))$ since $-R_{cost}/l\geq 0$. The second term is non-decreasing with $\pi_{1,l}(S|b(1))$ by the induction hypothesis and the assumption that $f_O$ is TP2. The second equality in Eq. \ref{eq:step_1_1} follows from Lemma \ref{lemma:dec_stopping} and because $b(1) \in \mathscr{S}^{k,(2)}_{1,l,\pi_{1,l}}$.
\end{itemize}
The other cases, e.g. $b(1) \in \mathscr{S}^{k,(2)}_{0,l,\pi_{1,l}} \cap \mathscr{S}^{k,(2)}_{1,l,\pi_{1,l}}$, can be discarded due to Lemma \ref{lemma:continue_stop_equality}. Hence, $W^{k}_{\pi_{1,l},l}(b(1))$ is non-decreasing with $\pi_{1,l}(S|b(1))$ for all $k \geq 0$.
\end{proof}
We now use Lemmas \ref{lemma:V_geq_zero}-\ref{lemma:decreasing_differences} to prove Theorem \ref{thm:best_responses}.C. The main idea behind the proof is to show that the stopping sets in state $s=1$ have the form: $\mathscr{S}^{2}_{1,l,\pi_{1,l}} = [\tilde{\beta}_{1,l},1]$, and that the continuation sets in state $s=0$ have the form: $\mathscr{C}^{2}_{0,l,\pi_{1,l}} = [\tilde{\beta}_{0,l},1]$, for some values $\tilde{\beta}_{0,1}, \tilde{\beta}_{1,1}, \hdots, \tilde{\beta}_{0,L}, \tilde{\beta}_{1,L} \in [0,1]$.
\begin{proof}[Proof of Theorem \ref{thm:best_responses}.C.]
We first show that $1 \in \mathscr{S}^{(2)}_{1,l,\pi_{1,l}}$ and that $1 \in \mathscr{C}^{(2)}_{0,l,\pi_{1,l}}$. Since $\pi_{1,l}(S|1)=1$, by Lemma \ref{lemma:stop_if_defender_knows} we have that $1 \in \mathscr{S}^{(2)}_{1,l,\pi_{1,l}}$ and it follows from Eqs. \ref{eq:bellman_eq_41}-\ref{eq:bellman_posg_1} that $\tilde{\pi}_{2,l}(0,b(1))=C$ iff:
\begin{align}
  &\sum_of_O(o|0) V^{*}_{\pi_{1,l},l-1}(0, b^{o}(1)) -\nonumber\\
  &f_O(o|1)V^{*}_{\pi_{1,l},l-1}(1, b^{o}(1))\geq 0
\end{align}
The left-hand side of the above equation is positive due to the assumption that $f_{O}$ is TP2 and since $\sum_of_O(o|0)V^{*}_{\pi_{1,l},l-1}(0,b^{o}(1))\geq 0$ by Lemma \ref{lemma:V_geq_zero} and $f_O(o|1)V^{*}_{\pi_{1,l},l-1}(1,$$b^{o}(1))=0$ by Lemma \ref{lemma:dec_stopping}. Hence, $1 \in \mathscr{C}^{(2)}_{0,l,\pi_{1,l}}$.

Now we show that $\mathscr{S}^{2}_{1,l,\pi_{1,l}} = [\tilde{\beta}_{1,l},1]$ and that $\mathscr{C}^{2}_{0,l,\pi_{1,l}} = [\tilde{\beta}_{0,l},1]$ for some values $\tilde{\beta}_{0,1}, \tilde{\beta}_{1,1}, \hdots, \tilde{\beta}_{0,L}, \tilde{\beta}_{1,L} \in [0,1]$. From Eqs. \ref{eq:bellman_eq_41}-\ref{eq:bellman_posg_1} we know that $\tilde{\pi}_{2,l}(1,b(1))=S$ iff:
\begin{align}
&\mathbb{E}_{\pi_{1,l}}\Big[\mathcal{R}_{l_t}\big(1,(a^{(1)}, C)\big)\\
  &(\phi_{l_t}-1)\sum_of_O(o|1)V^{*}_{\pi_{1,l},l-a^{(1)}}(1, b^{o}(1))\Big] \geq 0\nonumber
\end{align}
The first term in the above expectation is increasing with $b(1)$ (Eqs. \ref{eq:reward_0}-\ref{eq:reward_5}). The second term is decreasing with $b(1)$ (Lemma \ref{lemma:decreasing_v}). Hence, we conclude that if $\tilde{\pi}_{2,l}(1,b(1))=S$, then for any $b^{\prime}(1) \geq b(1)$, $\tilde{\pi}_{2,l}(1,b^{\prime}(1))=S$. As a consequence, there exists values $\tilde{\beta}_{1,1}, \hdots, \tilde{\beta}_{1,L}$ such that $\mathscr{S}^{2}_{1,l,\pi_{1,l}} = [\tilde{\beta}_{1,l},1]$.

Similarly, from Eqs. \ref{eq:bellman_eq_41}-\ref{eq:bellman_posg_1} we know that $\tilde{\pi}_{2,l}(0,b(1))=C$ iff:
\begin{align}
&\mathbb{E}_{\pi_{1,l}}\Big[\sum_of_O(o|0)V^{*}_{\pi_{1,l},l-a^{(1)}}(0, b^{o}(1))\label{eq:theorem_b_12}\\
&-f_O(o|1)V^{*}_{\pi_{1,l},l-a^{(1)}}(1, b^{o}(1))\Big] \geq 0\nonumber
\end{align}
Since $f_{O}$ is TP2 and $\pi_{1,l}(S|b(1))$ is increasing with $b(1)$, the left-hand side in the above inequality is decreasing (Lemma \ref{lemma:decreasing_v} and Lemma \ref{lemma:decreasing_differences}). Hence, we conclude that if $\tilde{\pi}_{2,l}(0,b(1))=C$, then for any $b^{\prime}(1) \geq b(1)$, $\tilde{\pi}_{2,l}(0,b^{\prime}(1))=C$. As a result, there exists values $\tilde{\beta}_{0,1}, \hdots, \tilde{\beta}_{0,L}$ such that $\mathscr{C}^{2}_{0,l,\pi_{1,l}} = [\tilde{\beta}_{0,l},1]$.
\end{proof}
\section{Hyperparameters}\label{appendix:hyperparameters}
\begin{table}
\centering
\resizebox{1\columnwidth}{!}{%
  \begin{tabular}{ll} \toprule
  \textbf{Game Parameters} & {\textbf{Values}} \\
  \hline
  $R_{st}, R_{cost}, R_{int}$,$\gamma$, $\phi_{l_t}$, $L$ & $20$, $-2$, $-1$, $0.99$, $1/2l_t$, $7$\\
  {\textbf{\textsc{T-FP} Parameters}} & {\textbf{Values}} \\
  \hline
  $c, \epsilon, \lambda, A, a, N$ & $10$, $0.101$, $0.602$, $100$, $1$, $50$\\
  {\textbf{NFSP Parameters}} & {\textbf{Values}} \\
  \hline
  lr RL, lr SL, batch, \# layers & $10^{-2}$,$5\cdot 10^{-3}$, $64$, $2$ \\
  \# neurons, $\mathcal{M}_{RL}$, $\mathcal{M}_{SL}$ & $128$, $2\times 10^{5}$, $2\times 10^{6}$,\\
  $\epsilon$, $\epsilon$-decay, $\eta$ & $0.06$, $0.001$, $0.1$\\
  {\textbf{HSVI Parameter}} & {\textbf{Value}} \\
  \hline
  $\epsilon$ & $3$\\
  \bottomrule\\
\end{tabular}
}
\caption{Hyperparameters of the POSG and the algorithms used for evaluation.}\label{tab:hyperparams}
\end{table}
The hyperparameters used for the evaluation are listed in Table \ref{tab:hyperparams} and were obtained through grid search.

\section{Configuration of the Infrastructure in Fig. \ref{fig:system2}}\label{appendix:infrastructure_configuration}
The configuration of the target infrastructure (Fig. \ref{fig:system2}) is available in Table \ref{tab:emulation_setup}.
\begin{table}
\centering
\resizebox{1\columnwidth}{!}{%
\begin{tabular}{ll} \toprule
  {\textit{ID (s)}} & {\textit{OS:Services:Exploitable Vulnerabilities}} \\ \midrule
  $N_1$ & Ubuntu20:Snort(community ruleset v2.9.17.1),SSH:- \\
  $N_2$ & Ubuntu20:SSH,HTTP Erl-Pengine,DNS:SSH-pw\\
  $N_4$ & Ubuntu20:HTTP Flask,Telnet,SSH:Telnet-pw \\
  $N_{10}$ &Ubuntu20:FTP,MongoDB,SMTP,Tomcat,TS3,SSH:FTP-pw \\
  $N_{12}$ & Jessie:TS3,Tomcat,SSH:CVE-2010-0426,SSH-pw \\
  $N_{17}$ & Wheezy:Apache2,SNMP,SSH:CVE-2014-6271 \\
  $N_{18}$ & Deb9.2:IRC,Apache2,SSH:SQL Injection \\
  $N_{22}$ & Jessie:PROFTPD,SSH,Apache2,SNMP:CVE-2015-3306 \\
  $N_{23}$ & Jessie:Apache2,SMTP,SSH:CVE-2016-10033 \\
  $N_{24}$ & Jessie:SSH:CVE-2015-5602,SSH-pw \\
  $N_{25}$ & Jessie: Elasticsearch,Apache2,SSH,SNMP:CVE-2015-1427\\
  $N_{27}$ & Jessie:Samba,NTP,SSH:CVE-2017-7494\\
  $N_3$,$N_{11}$,$N_{5}$-$N_9$& Ubuntu20:SSH,SNMP,PostgreSQL,NTP:-\\
  $N_{13-16}$,$N_{19-21}$,$N_{26}$,$N_{28-31}$& Ubuntu20:NTP, IRC, SNMP, SSH, PostgreSQL:-\\
  \bottomrule\\
\end{tabular}
}
\caption{Configuration of the target infrastructure (Fig. \ref{fig:system2}).}\label{tab:emulation_setup}
\end{table}

\ifCLASSOPTIONcaptionsoff
  \newpage
\fi

\bibliographystyle{IEEEtran}
\bibliography{references,url}

\end{document}

